\theoremstyle{plain}
\newtheorem{theorem}{Theorem}[section]
\newtheorem{proposition}[theorem]{Proposition}
\newtheorem{lemma}[theorem]{Lemma}
\theoremstyle{definition}
\theoremstyle{remark}
\newcommand{\bx}{\mathbf{x}}
\newcommand{\by}{\mathbf{y}}
\newcommand{\bz}{\mathbf{z}}
\newcommand{\bw}{\mathbf{w}}
\newcommand{\bu}{\mathbf{u}}
\newcommand{\bv}{\mathbf{v}}
\newcommand{\bA}{\mathbf{A}}
\newcommand{\bG}{\mathbf{G}}
\newcommand{\bB}{\mathbf{B}}
\newcommand{\bC}{\mathbf{C}}
\newcommand{\bD}{\mathbf{D}}
\newcommand{\bI}{\mathbf{I}}
\newcommand{\bM}{\mathbf{M}}
\newcommand{\bF}{\mathbf{F}}
\newcommand{\bS}{\mathbf{S}}
\newcommand{\bzero}{\mathbf{0}}
\newcommand{\mR}{\mathbb{R}}
\newcommand{\mC}{\mathbb{C}}
\newcommand{\T}{\intercal}
\title{Learning to Dissipate Energy in Oscillatory State-Space Models}
\author{
Jared Boyer\\ MIT CSAIL \\ \texttt{jaredb@mit.edu}
\And T. Konstantin Rusch \\ MIT CSAIL
\And Daniela Rus \\ MIT CSAIL
}
\begin{document}

\maketitle

\begin{abstract}
State-space models (SSMs) are a class of networks for sequence learning that benefit from fixed state size and linear complexity with respect to sequence length, contrasting the quadratic scaling of typical attention mechanisms.  Inspired from observations in neuroscience, Linear Oscillatory State-Space models (LinOSS) are a recently proposed class of SSMs constructed from layers of discretized forced harmonic oscillators.  Although these models perform competitively, leveraging fast parallel scans over diagonal recurrent matrices and achieving state-of-the-art performance on tasks with sequence length up to 50k, LinOSS models rely on rigid energy dissipation (``forgetting'') mechanisms that are inherently coupled to the time scale of state evolution.  As forgetting is a crucial mechanism for long-range reasoning, we demonstrate the representational limitations of these models and introduce Damped Linear Oscillatory State-Space models (D-LinOSS), a more general class of oscillatory SSMs that learn to dissipate latent state energy on arbitrary time scales.  We analyze the spectral distribution of the model's recurrent matrices and prove that the SSM layers exhibit stable dynamics under a simple, flexible parameterization. Without introducing additional complexity, D-LinOSS consistently outperforms previous LinOSS methods on long-range learning tasks, achieves faster convergence, and reduces the hyperparameter search space by 50\%.
\end{abstract}

\section{Introduction}
State-space models (SSMs) \citep{s4,s5,mamba,hasani2022liquid,linoss} have emerged as a powerful deep learning architecture for sequence modeling, demonstrating strong performances across various domains, including natural language processing \citep{mamba}, audio generation \citep{sashimi}, reinforcement learning \citep{s4RL}, and scientific and engineering applications \citep{hu2024state}.

Despite the abundance of neural network architectures for sequence modeling, SSMs have gained significant attention due to their fundamental advantages over both Recurrent Neural Networks (RNNs) and Transformer architectures based on self-attention mechanisms \citep{vaswani}.  Built upon layers of sequence-to-sequence transformations defined by linear dynamical systems, SSMs integrate principles from control theory with modern deep learning techniques, making them highly effective across multiple modalities.  While recent SSM architectures are often formulated as linear RNNs \citep{lru}, they introduce notable improvements over their predecessors, offering enhanced speed, accuracy, and the ability to capture long-range dependencies more effectively.

In this work, we focus on the regime of linear, time-invariant (LTI) SSMs and extend the recently introduced Linear Oscillatory State-Space model (LinOSS) \citep{linoss}.  LinOSS formulates a continuous-time system of second-order ordinary differential equations (ODEs) that represent forced harmonic oscillators.  These dynamics are then discretized into two conditionally stable state-space variants, each derived from a different ODE integration method, and the resulting models are computed efficiently by leveraging associative parallel scans.  The structure of the underlying oscillatory dynamics allows LinOSS to learn long-range interactions with minimal constraints on the SSM state matrix.  However, previous LinOSS models inherently couple the frequency and damping behaviors of state-space layers, effectively collapsing latent state energy dissipation to a single scale and limiting the model's expressivity.  To overcome this, we introduce a flexible and controllable dissipation mechanism and derive the \emph{Damped Linear Oscillatory State-Space Model (D-LinOSS)}, enhancing the LinOSS architecture by incorporating learnable damping independent of time scale. 

Our approach constructs a deep state space model capable of capturing a wide range of temporal relationships by expanding the expressivity of individual SSM layers.  Unlike previous versions of LinOSS that were constrained to a limited subset of oscillatory systems, our method allows each layer to independently learn a wider range of stable oscillatory dynamics, collectively leading to a more powerful sequence model. Our full contributions are:
\begin{itemize}
    \item We conduct a rigorous spectral analysis of the proposed D-LinOSS model, highlighting the representational improvements enabled by learnable damping.
    \item We validate the theoretical expressivity improvements through a synthetic experiment of learning exponential decay.
    \item We derive a stable parameterization of D-LinOSS and introduce an initialization procedure to generate arbitrary eigenvalue distributions in the recurrent matrix.  We perform ablations comparing different initialization techniques.
    \item We provide extensive empirical evaluation, showing that D-LinOSS on average outperforms state-of-the-art models across eight different challenging real-world sequential datasets.
    \item We showcase the additional practical benefits of D-LinOSS, such as faster convergence time and a 50\% reduction in model hyperparameter space compared to previous LinOSS models.
    \item To support reproducibility and further research, we release our code and experiments at \href{https://github.com/jaredbmit/damped-linoss}{\texttt{github.com/jaredbmit/damped-linoss}}.
\end{itemize}

\begin{figure}[t]
    \centering
    \includegraphics[width=\textwidth]{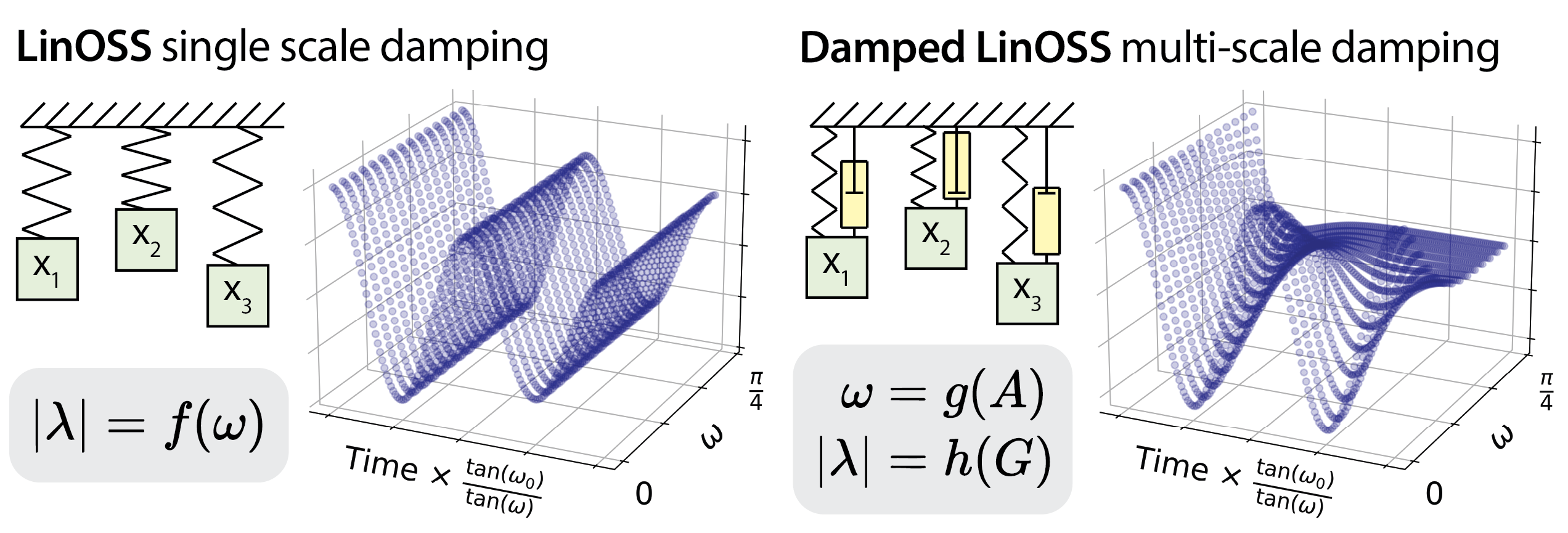}
    \vskip1em
    \caption{Previous LinOSS models directly couple the frequency and magnitude of discretized system eigenvalues, reducing latent state energy dissipation to a single scale when normalizing time by frequency.  Through a simple, complexity-free parametric extension, D-LinOSS can learn system damping on any scale, independent of frequency, expanding the range of expressible internal dynamics.  The specific damping behavior depicted in the right diagram is selected arbitrarily.}
    \label{fig:wide_figure}
\end{figure}

\section{Background}

\subsection{Continuous-time formulation}

D-LinOSS layers are constructed from a system of damped, forced harmonic oscillators:

\begin{equation}
\begin{aligned}
\label{eq:second_order_system}
\bx''(t) &= - \bA \bx(t) - \bG \bx'(t) + \bB \bu(t), \\
\by(t) &= \bC \bx(t) + \bD \bu(t)
\end{aligned}
\end{equation}

The continuous-time parameters $\bA$ and $\bG$ are restricted to diagonal matrices with non-negative entries, meaning \eqref{eq:second_order_system} is an uncoupled second-order system.  The feed-forward operation $\bD \bu(t)$ will be omitted for the rest of the paper for concision. 

D-LinOSS layers provide an expressive, stable, and efficient recurrent primitive for modeling intermediate sequence-to-sequence transformations $\bu \mapsto \by$ in $\mR^m$ through learning parameters $\bA$, $\bG$, $\bB$, and $\bC$.  $\bA$ controls the natural frequency of the system's oscillation and $\bG$ defines the damping, i.e., the energy dissipation of the latent state.  The underlying dynamical system of previous LinOSS models is \eqref{eq:second_order_system} subject to $\bG = \bzero$; thus, D-LinOSS is constructed from a more general oscillatory dynamical system with learnable damping.  The additional $m$ learnable parameters from $\bG$ are a negligible contribution to model size and have no impact on speed.

\subsection{Discretization}

The D-LinOSS discrete-time state-space layer is derived by adopting an integration scheme to approximately solve System \eqref{eq:second_order_system} as an initial-value problem (IVP) subject to $\bx(0) = \bx'(0) = \bzero$.  This discretization technique is effectively a specification for mapping the underlying continuous-time parameters $\bA, \bG,$ and $\bB$ to discrete-time counterparts $\bM$ and $\bF$ in the following systems.

\makebox[\textwidth][c]{
\begin{minipage}{0.55\textwidth}
\begin{equation}
\label{eq:auxiliary_system}
\begin{aligned}
\begin{bmatrix}\bz'(t) \\ \bx'(t)\end{bmatrix} 
&= \begin{bmatrix}-\bG & -\bA\\ \bI & \bzero\end{bmatrix}
\begin{bmatrix}\bz(t) \\ \bx(t)\end{bmatrix} 
+ \begin{bmatrix}\bB \\ \bzero\end{bmatrix}
\bu(t)
\end{aligned}
\end{equation}
\end{minipage}
\quad
{$\longrightarrow$}
\begin{minipage}{0.40\textwidth}
\begin{equation}
\label{eq:discrete_system}
\begin{aligned}
\begin{bmatrix}\bz_{k+1} \\ \bx_{k+1}\end{bmatrix} 
&= \bM \begin{bmatrix}\bz_{k} \\ \bx_{k}\end{bmatrix} 
+ \bF \bu_{k+1}
\end{aligned}
\end{equation}
\end{minipage}
}
\vskip 1em

Discretization also introduces learnable time-step parameters $\Delta t \in \mR^m$ that govern the integration interval for the ODE solution.

Unlike standard first-order SSMs, oscillatory SSMs explicitly model the acceleration and velocity of the system state, resulting in smoother outputs due to the twice-integrated dynamical structure.  As a result, although most SSMs discretize the continuous-time dynamics using zero-order hold or the bilinear method, the second-order structure of D-LinOSS necessitates the use of special discretization schemes to maintain conditional system stability without over-constraining the matrices $\bA$ and $\bG$.

Specifically, \citet{linoss} investigate the use of implicit integration (IM) and symplectic integration, also referred to as implicit-explicit integration (IMEX), as discretization methods for their proposed continuous-time system of \emph{undamped} harmonic oscillators.  Each integrator endows the resulting SSM with different energy dissipation properties; IM integration produces a dissipation term coupled to the eigenvalue phase and IMEX integration completely preserves energy across time.  The selection of discretization technique is thus a binary hyperparameter in the original LinOSS model used to modulate the amount of ``forgetting," giving rise to two SSMs (LinOSS-IM and LinOSS-IMEX) exhibiting different dynamical behaviors.

We extend the use of IMEX integration to the D-LinOSS framework, as learnable damping allows for full dynamical control regardless of which discretization method is used.  This flexibility in parameterization removes the need to treat discretization schemes as a hyperparameter, reducing the search space by 50\%.

Applying IMEX integration to System \eqref{eq:second_order_system} yields:

\begin{equation}
\label{eq:dlinoss_discrete}
\begin{aligned}
\bz_{k+1} &= \bz_k + \Delta t \big( - \bA \bx_k - \bG \bz_{k+1} + \bB \bu_{k+1} \big), \\
\bx_{k+1} &= \bx_k + \Delta t \bz_{k+1}
\end{aligned}
\end{equation}

or in matrix form,

\begin{equation}
\label{eq:dlinoss_discrete_matrix}
\begin{bmatrix}
\bI + \Delta t \bG & \bzero \\
- \Delta t \bI & \bI
\end{bmatrix}
\begin{bmatrix}
\bz_{k+1} \\
\bx_{k+1} 
\end{bmatrix}
=
\begin{bmatrix}
\bI & - \Delta t \bA \\
\bzero & \bI
\end{bmatrix}
\begin{bmatrix}
\bz_k \\
\bx_k 
\end{bmatrix}
+
\begin{bmatrix}
\Delta t \bB \\
\bzero
\end{bmatrix}
\bu_k
\end{equation}

Inverting the left hand side block matrix, we arrive at the final discrete-time SSM in the form of \eqref{eq:discrete_system}.

\begin{equation}
\label{eq:dlinoss_matrices}
\bM :=
\begin{bmatrix}
\bS^{-1} & - \Delta t \bS^{-1} \bA \\
\Delta t \bS^{-1} & \bI - \Delta t^2 \bS^{-1} \bA
\end{bmatrix}, \quad
\bF := 
\begin{bmatrix}
    \Delta t \bS^{-1} \bB  \\
    \Delta t^2 \bS^{-1} \bB
\end{bmatrix}
\end{equation}

Here, the Schur complement is the diagonal matrix $\bS = \bI+ \Delta t \bG$ and $\bM$ is a block matrix composed of diagonal sub-matrices. 

\subsection{Associative parallel scans}

Many modern SSM architectures \citep{s5} leverage associative parallel scans \citep{kogge1973parallel, asso_scan} to efficiently compute recurrent operations across long sequences.  By exploiting the associativity of the recurrence operator, naively $O(N)$ sequential operations can be parallelized and computed in $O(\log N)$ time.  For SSMs, parallel scans enable sub-linear complexity of the recurrence computation with respect to sequence length, acting as a key building block for scaling SSMs to long contexts.

\section{Theoretical properties}

Spectral analysis provides a lens to examine the stability and dynamical behavior of SSMs.  In the absence of bounding nonlinearities like $\tanh$, the eigenvalues of the recurrent matrix $\bM$ fully govern how latent states evolve across time.  In particular, eigenvalues with near unit norm retain energy across long time horizons, while those closer to zero rapidly dissipate energy.

In the previous LinOSS-IM and LinOSS-IMEX models, the internal system spectra are rigidly defined by the selection of discretization technique, coupling frequency and damping.  As shown in Figure~\ref{fig:wide_figure}, this effectively reduces latent state energy dissipation to a single scale, limiting the range of expressible dynamics.  For D-LinOSS, the spectrum of $\bM$ instead arises from damped harmonic oscillators, introducing a new tunable mechanism that decouples damping from frequency.  Unlike the preceding models, D-LinOSS layers can represent all stable second-order systems, yielding a broader range of expressible dynamics and thus a more powerful sequence model.  Figure~\ref{fig:wide_figure} depicts this, where the scale of energy dissipation can be arbitrarily selected regardless of oscillation frequency. 

These notions are formalized in this section, where we characterize the eigenvalues of D-LinOSS, derive stability conditions, and compare the resulting spectral range to that of previous LinOSS models.  In particular, we show that the set of reachable, stable eigenvalue configurations in D-LinOSS is the full complex unit disk, where that of LinOSS has zero measure in $\mC$.

\subsection{Spectral analysis and stability}
\label{sec:proofs}

\begin{proposition}
\label{prop:eigen}
The eigenvalues of the D-LinOSS recurrent matrix $\bM \in \mR^{2m \times 2m}$ are

\begin{equation}
\lambda_{i_{1,2}} = \frac{1 + \frac{\Delta t_i}{2} \bG_i - \frac{\Delta t_i^2}{2} \bA_i}{1 + \Delta t_i \bG_i} \pm \frac{\frac{\Delta t_i}{2} \sqrt{(\bG_i - \Delta t_i \bA_i)^2  - 4 \bA_i}}{1 + \Delta t_i \bG_i},
\end{equation}

where pairs of eigenvalues are denoted as $\lambda_{i_{1,2}}$ and $i = 1, 2, ..., m$.
\end{proposition}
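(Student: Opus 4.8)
The plan is to reduce to $2\times 2$ blocks and apply the quadratic formula. Since $\bA$, $\bG$, and therefore $\bS = \bI + \Delta t\bG$ are diagonal, every block of $\bM$ in \eqref{eq:dlinoss_matrices} is diagonal; conjugating by the permutation that interleaves the $\bz$- and $\bx$-coordinates turns $\bM$ into a block-diagonal matrix with $m$ blocks of size $2 \times 2$, the $i$-th being
\begin{equation*}
M_i = \begin{bmatrix} \tfrac{1}{1+\Delta t_i \bG_i} & \tfrac{-\Delta t_i \bA_i}{1+\Delta t_i \bG_i} \\ \tfrac{\Delta t_i}{1+\Delta t_i \bG_i} & 1 - \tfrac{\Delta t_i^2 \bA_i}{1+\Delta t_i \bG_i} \end{bmatrix}.
\end{equation*}
Because conjugation preserves eigenvalues, $\mathrm{spec}(\bM) = \bigcup_{i=1}^m \mathrm{spec}(M_i)$, so it suffices to diagonalize each $M_i$ individually.

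For a $2\times 2$ matrix, $\lambda_{1,2} = \tfrac12\,\mathrm{tr} \pm \tfrac12\sqrt{\mathrm{tr}^2 - 4\det}$, so the next step is to compute $\mathrm{tr}(M_i)$ and $\det(M_i)$. Summing the diagonal entries gives $\mathrm{tr}(M_i) = \frac{2 + \Delta t_i \bG_i - \Delta t_i^2\bA_i}{1+\Delta t_i\bG_i}$, whose half is precisely the first term in the claimed expression. The determinant collapses: the off-diagonal product $\frac{-\Delta t_i\bA_i}{1+\Delta t_i\bG_i}\cdot\frac{\Delta t_i}{1+\Delta t_i\bG_i}$ exactly cancels the $-\frac{\Delta t_i^2\bA_i}{(1+\Delta t_i\bG_i)^2}$ term in the product of the diagonal entries, leaving $\det(M_i) = \frac{1}{1+\Delta t_i\bG_i}$. (The same value can be read off from the factorization $\bM = L^{-1}R$ underlying \eqref{eq:dlinoss_discrete_matrix}: $R$ is unit upper-triangular and $L$ is block lower-triangular with diagonal $\bI + \Delta t\bG$, so $\det\bM = 1/\det(\bI + \Delta t\bG)$, i.e.\ $1/(1+\Delta t_i\bG_i)$ on the $i$-th block.)

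It remains to simplify the discriminant $\mathrm{tr}(M_i)^2 - 4\det(M_i)$. Writing $a = \Delta t_i\bG_i$ and $b = \Delta t_i^2\bA_i$ for brevity, this equals $\frac{(2+a-b)^2 - 4(1+a)}{(1+a)^2}$, and expanding the numerator gives the clean identity $(2+a-b)^2 - 4(1+a) = (a-b)^2 - 4b = \Delta t_i^2\big((\bG_i - \Delta t_i\bA_i)^2 - 4\bA_i\big)$. Taking the square root extracts a factor $\Delta t_i$ into the numerator and $1+\Delta t_i\bG_i$ into the denominator; dividing by $2$ and adding the half-trace yields exactly the stated $\lambda_{i_{1,2}}$. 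The argument is elementary, so there is no genuine obstacle; the only care needed is the bookkeeping in the discriminant expansion and correctly pulling the $\tfrac{\Delta t_i}{2}$ prefactor out from under the radical (and, if one wants to be scrupulous, noting $1+\Delta t_i\bG_i > 0$ since $\bG_i \ge 0$ and $\Delta t_i > 0$, so every denominator and the factorization above are valid).
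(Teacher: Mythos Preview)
Your proof is correct and follows essentially the same approach as the paper: both reduce $\bM$ to $m$ decoupled $2\times 2$ blocks and solve the resulting quadratic characteristic equation, with your trace--determinant formulation being a mild repackaging of the paper's direct expansion of $\det(\lambda\bI - [\bM]_i)$. Your additional observation that $\det M_i = 1/(1+\Delta t_i\bG_i)$ can be read off from the $L^{-1}R$ factorization in \eqref{eq:dlinoss_discrete_matrix} is a nice shortcut not present in the paper.
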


\begin{proof}
The derivation is provided in Appendix~\ref{app:eigen}.  Because the $m$ second-order systems are decoupled, it is sufficient to subsequently analyze the spectral properties and stability conditions for a single system with index $i \in {1, \dots, m}$.
\end{proof}

Proposition \ref{prop:eigen} shows that eigenvalues of D-LinOSS are tuned through choices of $\bA$, $\bG$, and $\Delta t$.  We now detail a sufficient condition for system stability.

\begin{proposition}
\label{prop:stability}
Assume $\bA_i$, $\bG_i$ are non-negative, and $\Delta t_i \in (0, 1]$. If the following is satisfied:

\begin{equation}
(\bG_i - \Delta t_i \bA_i)^2 \leq 4 \bA_i,
\label{eq:criterion}
\end{equation}

then $\lambda_{i_{1,2}}$ come in complex conjugate pairs $\lambda_i$, $\lambda_i^*$ with the following magnitude:

\begin{equation}
|\lambda_i| = \frac{1}{\sqrt{1 + \Delta t_i \bG_i}} \leq 1,
\label{eq:magnitude}
\end{equation}

i.e., the eigenvalues are unit-bounded. Define $\mathcal{S}_i$ to be the set of all $(\bA_i, \bG_i)$ that satisfy the above condition.  For notational convenience, we order the eigenvalues such that $\text{\emph{Im}}(\lambda_i) \geq 0$, $\text{\emph{Im}}(\lambda_i^*) \leq 0$.
\end{proposition}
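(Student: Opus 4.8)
The plan is to fix a single index $i$ and work only with the corresponding decoupled second-order subsystem, which suffices since the $m$ subsystems do not interact (exactly as invoked in the proof of Proposition~\ref{prop:eigen}). I would write $S_i := 1 + \Delta t_i \bG_i$, the $i$th diagonal entry of $\bS$, noting $S_i > 0$ because $\Delta t_i > 0$ and $\bG_i \geq 0$. Two things then need to be established: (i) under~\eqref{eq:criterion} the pair $\lambda_{i_{1,2}}$ of Proposition~\ref{prop:eigen} is a genuine complex-conjugate pair; and (ii) its common modulus equals $S_i^{-1/2} \leq 1$.

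For (i), I would observe that criterion~\eqref{eq:criterion} says precisely that the radicand in Proposition~\ref{prop:eigen}, namely $(\bG_i - \Delta t_i \bA_i)^2 - 4\bA_i$, is nonpositive, so that $\sqrt{(\bG_i - \Delta t_i \bA_i)^2 - 4\bA_i} = \mathrm{i}\sqrt{4\bA_i - (\bG_i - \Delta t_i \bA_i)^2}$ is purely imaginary, while the leading term $(1 + \tfrac{\Delta t_i}{2}\bG_i - \tfrac{\Delta t_i^2}{2}\bA_i)/S_i$ and the prefactor $\Delta t_i/(2 S_i)$ are real. Hence $\lambda_{i_{1,2}}$ share a real part and carry opposite imaginary parts; they are $\lambda_i$ and $\lambda_i^*$. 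The degenerate subcase of equality in~\eqref{eq:criterion} yields a repeated real eigenvalue that still satisfies~\eqref{eq:magnitude}, which I would read as a trivial conjugate pair.

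For (ii), the cleanest route is to use that the product of the two eigenvalues of the subsystem equals the determinant of its $2\times 2$ recurrent matrix. By~\eqref{eq:dlinoss_matrices} that matrix is $\begin{bmatrix} S_i^{-1} & -\Delta t_i S_i^{-1} \bA_i \\ \Delta t_i S_i^{-1} & 1 - \Delta t_i^2 S_i^{-1} \bA_i \end{bmatrix}$, whose determinant is $S_i^{-1}(1 - \Delta t_i^2 S_i^{-1}\bA_i) + \Delta t_i^2 S_i^{-2}\bA_i = S_i^{-1}$, the two $\bA_i$-terms cancelling. Since part~(i) gives $\lambda_i^* = \overline{\lambda_i}$, this yields $|\lambda_i|^2 = \lambda_i \lambda_i^* = S_i^{-1} = (1 + \Delta t_i \bG_i)^{-1}$, i.e.~\eqref{eq:magnitude}; alternatively one can square and add the real and imaginary parts of $\lambda_{i_{1,2}}$ directly, whereupon every cross term cancels and the numerator collapses to $1 + \Delta t_i \bG_i$. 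Since $1 + \Delta t_i \bG_i \geq 1$, we conclude $|\lambda_i| \leq 1$.

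No step here is genuinely hard: the conjugacy claim follows immediately from the sign of the radicand, and the modulus is a one-line determinant computation once one spots the cancellation of the $\bA_i$-terms (equivalently, the cancellation in the square-and-add identity). The only points I would state carefully are the boundary case $(\bG_i - \Delta t_i\bA_i)^2 = 4\bA_i$, where the pair degenerates to a repeated real eigenvalue but~\eqref{eq:magnitude} persists, and the remark that this proposition actually uses only $\Delta t_i > 0$ rather than the full $\Delta t_i \in (0,1]$; I would retain the stated hypotheses for consistency with the reachability results that follow.
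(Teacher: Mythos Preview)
Your proposal is correct. The conjugacy argument in part~(i) is identical to the paper's: both simply read~\eqref{eq:criterion} as nonpositivity of the discriminant in Proposition~\ref{prop:eigen}. For part~(ii) the paper computes the modulus by directly squaring and adding the real and imaginary parts of $\lambda_i$---the route you list as an alternative---whereas your primary argument goes through the determinant of the $2\times2$ block, using $|\lambda_i|^2 = \lambda_i\lambda_i^* = \det[\bM]_i = S_i^{-1}$. This determinant shortcut is slightly cleaner because the cancellation of the $\bA_i$-terms is immediate and no squaring is needed; the paper's direct computation reaches the same endpoint but hides the intermediate algebra. Your remarks on the degenerate equality case and on the hypothesis $\Delta t_i \in (0,1]$ being stronger than needed here are accurate and go a bit beyond what the paper states explicitly.
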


\begin{proof}
The proof is detailed in Appendix~\ref{app:stability}. Condition \eqref{eq:criterion} is simply the non-positivity of the discriminant in the eigenvalue expression of Proposition \ref{prop:eigen}, which is shown to be sufficient for the unit-boundedness of $|\lambda_i|$.
\end{proof}

We now demonstrate that the spectral image of $\mathcal{S}_i$ is the full unit disk, meaning D-LinOSS is capable of representing every stable, damped, uncoupled second-order system.

\begin{proposition}
\label{prop:bijective}
The mapping $\Phi : \mathcal{S}_i \rightarrow \mC_{|z|\leq1}$ defined by $(\bA_i, \bG_i) \mapsto (\lambda_{i}, \lambda_{i}^*)$ is bijective.
\end{proposition}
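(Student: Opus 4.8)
The plan is to make $\Phi$ explicitly invertible by reading off $\mathrm{Re}(\lambda_i)$ and $|\lambda_i|$ from the trace and determinant of the $i$-th $2\times2$ diagonal block of $\bM$. Write $s_i := 1 + \Delta t_i\bG_i$, let $\bM_i$ denote that block, and set $T := \operatorname{tr}(\bM_i)$, $D := \det(\bM_i)$. A direct computation from \eqref{eq:dlinoss_matrices} gives $T = (2 + \Delta t_i\bG_i - \Delta t_i^2\bA_i)/s_i$ and $D = 1/s_i$. On $\mathcal{S}_i$ the eigenvalues of $\bM_i$ are the conjugate pair $\lambda_i, \lambda_i^*$, so $2\,\mathrm{Re}(\lambda_i) = T$ and $|\lambda_i|^2 = D = 1/s_i$, which recovers \eqref{eq:magnitude}. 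Since $\Delta t_i \in (0,1]$ is fixed, these two relations invert to
\[ \bG_i = \frac{1 - |\lambda_i|^2}{\Delta t_i\,|\lambda_i|^2}, \qquad \bA_i = \frac{|\lambda_i - 1|^2}{\Delta t_i^2\,|\lambda_i|^2}, \]
where we used $|\lambda_i - 1|^2 = |\lambda_i|^2 - 2\,\mathrm{Re}(\lambda_i) + 1$. Because $|\lambda_i| > 0$ at every point of $\mathcal{S}_i$, the right-hand sides are well defined, so $(\bA_i, \bG_i)$ is uniquely determined by $\lambda_i$, hence by $(\lambda_i, \lambda_i^*)$; this is injectivity of $\Phi$.

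For surjectivity I take an arbitrary $z \in \mC$ with $|z| \le 1$ and $\mathrm{Im}(z) \ge 0$ — the ordering convention of Proposition~\ref{prop:stability} means it suffices to realize such $z$ as $\lambda_i$, since $\bar z$ is then automatically $\lambda_i^*$ — define $(\bA_i, \bG_i)$ by the displayed formulas with $\lambda_i$ replaced by $z$, and check the two required facts: $(\bA_i, \bG_i) \in \mathcal{S}_i$, and $\Phi(\bA_i, \bG_i) = (z, \bar z)$. Non-negativity is immediate: $\bG_i \ge 0$ since $|z| \le 1$, and $\bA_i \ge 0$ as a ratio of squares. For membership in $\mathcal{S}_i$, i.e.\ inequality \eqref{eq:criterion}, the identity $T^2 - 4D = (\Delta t_i^2/s_i^2)\big[(\bG_i - \Delta t_i\bA_i)^2 - 4\bA_i\big]$ (a short expansion) shows that \eqref{eq:criterion} is equivalent to $T^2 \le 4D$; with $T = 2\,\mathrm{Re}(z)$ and $D = |z|^2$ this reads $\mathrm{Re}(z)^2 \le |z|^2$, which always holds. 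Finally, the characteristic polynomial of $\bM_i$ equals $\mu^2 - T\mu + D = \mu^2 - 2\,\mathrm{Re}(z)\,\mu + |z|^2 = (\mu - z)(\mu - \bar z)$, so its roots are exactly $z$ and $\bar z$, and the one with nonnegative imaginary part is $z$; hence $\Phi(\bA_i, \bG_i) = (z, \bar z)$.

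Together the two parts give the claimed bijection. The step I expect to require the most care is the discriminant identity above: it is only a substitution, but it is the one place where the algebra behind Proposition~\ref{prop:eigen} must be unwound carefully, and it is precisely what lets the stability check \eqref{eq:criterion} collapse to the tautology $\mathrm{Re}(z)^2 \le |z|^2$. A secondary point worth flagging is the treatment of degenerate and boundary configurations: real targets $z$ with $0 < |z| \le 1$ correspond to the boundary of $\mathcal{S}_i$, where \eqref{eq:criterion} holds with equality and $\lambda_i = \lambda_i^*$ is a repeated real eigenvalue, while the origin is attained only in the limit $\bG_i \to \infty$, so strictly $\Phi$ is a bijection onto $\{z \in \mC : 0 < |z| \le 1\}$, whose closure is $\mC_{|z| \le 1}$.
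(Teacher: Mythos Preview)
Your argument is correct and follows essentially the same route as the paper: write down the explicit inverse (the paper's formulas $\bA_i = (\lambda_i\lambda_i^* - \lambda_i - \lambda_i^* + 1)/(\Delta t_i^2\lambda_i\lambda_i^*)$ and $\bG_i = (1-\lambda_i\lambda_i^*)/(\Delta t_i\lambda_i\lambda_i^*)$ are exactly your formulas rewritten via $|\lambda_i|^2 = \lambda_i\lambda_i^*$ and $|\lambda_i-1|^2 = \lambda_i\lambda_i^* - \lambda_i - \lambda_i^* + 1$), then check non-negativity and the discriminant condition~\eqref{eq:criterion}. Your version is in fact more complete than the paper's: you explicitly verify that the recovered $(\bA_i,\bG_i)$ maps back to $(z,\bar z)$ via the characteristic polynomial, and your closing remark about $z=0$ is a genuine observation --- since $|\lambda_i| = (1+\Delta t_i\bG_i)^{-1/2} > 0$ for every finite $\bG_i$, the image of $\Phi$ is the punctured disk $\{0<|z|\le 1\}$, a nuance the paper's proof does not address.
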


\begin{proof}
In Appendix \ref{app:bijectivity}, a well-defined inverse mapping $\Phi^{-1} : \mC_{|z|\leq1} \rightarrow \mathcal{S}_i, \ (\lambda_{i}, \lambda_{i}^*) \mapsto (\bA_i, \bG_i)$ is constructed.  This inverse map has practical utility in matrix initialization, enabling the selection of arbitrary distributions of stable eigenvalues.
\end{proof}

In contrast to the full expressive range of D-LinOSS layers, LinOSS-IM and LinOSS-IMEX layers are limited in reachable eigenvalues.  We recall the respective expressions from \citet{linoss}:

\begin{equation}
\label{eq:linoss_eig}
\lambda^\text{IM}_{i_{1,2}} = \frac{1}{1 + \Delta t_i^2 \bA_i} \pm j\frac{\Delta t_i \sqrt{\bA_i}}{1 + \Delta t_i^2 \bA_i},
\quad 
\lambda^\text{IMEX}_{i_{1,2}} = \frac{1}{2}(2- \Delta t_i^2 \bA_i) \pm \frac{j}{2} \sqrt{\Delta t_i^2 \bA_i (4 - \Delta t_i^2 \bA_i)},
\end{equation}

It can be seen that both forms impose a rigid relationship between frequency and damping, constraining the reachable spectra.  The following proposition formalizes this by showing that the set of eigenvalues reachable under these parameterizations occupies zero area within the unit disk.  Appendix \ref{app:measure} shows the respective expressions graphed on $\mC_{|z|\leq1}$ for a visual interpretation.

\begin{proposition}
For both LinOSS-IM and LinOSS-IMEX, the set of eigenvalues constructed from $\bA_i \in \mR_{\geq 0}$ and $\Delta t_i \in (0,1]$ is of measure zero in $\mC$.
\end{proposition}

\begin{proof}
Detailed in Appendix \ref{app:measure}, the change of variables $\gamma_i = \Delta t_i \sqrt{\bA_i}$ renders both eigenvalue expressions one-dimensional curves, which have zero measure in $\mC$.
\end{proof}

The incorporation of explicitly learnable damping enables D-LinOSS to model a wider range of stable dynamical systems, increasing the representational capacity of the overall deep sequence model.  The empirical performance benefits are discussed in the following sections.

\subsection{Motivation}
\label{sec:motivation}

A natural question is whether or not a larger set of reachable eigenvalues is empirically useful.  Notably, LinOSS is provably universal \citep{linoss,neural_oscil}, meaning it can approximate any causal and continuous operator between input and output signals to arbitrary accuracy (see Appendix~\ref{app:universality} for a formal definition).  This property trivially extends to D-LinOSS, as setting $\bG = \bzero$ recovers LinOSS-IMEX.  However, while universality characterizes theoretical capacity, it is not necessarily indicative of how well a model learns in practice.  To motivate the empirical benefits of a broader spectral range, we investigate the following synthetic experiment.

\begin{table}[htbp]
\centering
\caption{Learning exponential decay.}
\label{tab:decay}
\begin{tabular}{lcc}
\toprule
Model & RMSE $\times 10^{-3}$ & \\
\midrule
LinOSS-IMEX & 24.5 $\pm$ 2.3 & \emph{30.6} $\times$ \\
LinOSS-IM & 8.0 $\pm$ 1.7 & \emph{10.0} $\times$ \\
D-LinOSS & 0.8 $\pm$ 0.1 & \emph{1.0} $\times$ \\ 
\bottomrule
\end{tabular}
\end{table}

We simulate a dynamical system with a single discrete eigenvalue $\lambda = 0.8$, corresponding to an exponentially decaying response. No input or output transformations are applied.  Random sequences of scalar values are passed through this system, and models are trained to predict the resulting output.  A small hyperparameter grid was searched for each model, three random seeds were trained per configuration, and models are compared in terms of the best average test root mean squared error (RMSE) across seeds. (Appendix \ref{app:decay} provides more detail on the experimental setup.)

D-LinOSS achieves test RMSE approximately $10\times$ lower than LinOSS-IM and $30\times$ lower than LinOSS-IMEX.  Reflecting on the eigenvalue ranges depicted in Appendix \ref{app:measure}, the target eigenvalue $\lambda = 0.8$ lies outside the reachable spectra of both baseline models.  This gap in performance suggests that although the previous LinOSS models are universal, a limited spectral range can impair the models' ability to represent certain temporal relationships.  By directly learning system damping $\bG$, D-LinOSS moves beyond rigidly defined energy dissipation behavior and is capable of accurately capturing a wider range of dynamics.

\subsection{Parameterization}
\label{sec:parameterization}

During training, the learnable parameters $\bA, \bG$, and $\Delta t$ must satisfy the constraints detailed in Proposition \ref{prop:stability} to ensure system stability.  Unconstrained (denoted by bar) continuous-time SSM matrices and integration time steps are reparameterized through the following activations to enforce these conditions.
\[
\Delta t = \sigma(\bar {\Delta t}), \quad 
\bG = \text{ReLU}(\bar \bG), \quad
\bA = \text{Clamp}(\bar \bA, L(\bG, \Delta t), U(\bG, \Delta t))
\]
Here, $\text{Clamp}(\cdot)$ bounds $\bar \bA$ between lower bound $L$ and upper bound $U$, which are expressions derived from the quadratic inequality of the stability criterion $\mathcal{S}$.

\subsection{Initialization}

Performance of SSMs can be heavily impacted by the initialized distribution of eigenvalues \citep{lru}.  Many approaches, e.g., \citet{s4, s5}, leverage structured initialization schemes such as the HiPPO framework (\cite{hippo}) to enhance long-range learning capability, but \citet{lru} indicates that simpler initialization techniques, such as uniform ring initialization, are capable of recovering equal performance.

Given the parameterization in Section \ref{sec:parameterization} and the bijective mapping between the parameter space $(\bA_i, \bG_i)$ and the eigenspace $(\lambda_i, \lambda_i^*)$ established in Proposition \ref{prop:bijective}, we investigate initialization strategies for SSM layers based on uniform sampling in each of these spaces.  Appendix \ref{app:study} details a comparison between two approaches: (i) uniformly sampling parameters $(\bA_i, \bG_i)$ across different ranges $[A_{\text{min}}, A_{\text{max}}]$ and $[G_{\text{min}}, G_{\text{max}}]$, and (ii) uniformly sampling eigenvalues within a complex ring across different radial bounds $[r_{\text{min}}, r_{\text{max}}]$ and angular bounds $[\theta_{\text{min}}, \theta_{\text{max}}]$.  In the latter case, parameters are then obtained via the inverse mapping $(\bA_i, \bG_i) = \Phi^{-1}(\lambda_i)$ for all $i \in \{1, \dots, m\}$.  Our study indicates that sampling eigenvalues within the radial band $[0.9, 1.0]$ and the full angular range $[0, 2\pi)$ yields strong performance.  This technique is used for all subsequent D-LinOSS results.

\section{Results}

Following the experimental design from \citet{linoss, walker2024log, informer}, we evaluate the empirical performance of D-LinOSS on a suite of real-world learning tasks that span disciplines across biology, medicine, chemistry, photonics, and climate.  As the linear complexity and fixed state size of SSMs emphasize their utility for learning long-range dependencies, we evaluate candidate models on datasets with temporal relationships spanning thousands of measurements.  We compare model performance with a total of sixteen other state-of-the-art sequence modeling approaches, including the precursor models LinOSS-IM and LinOSS-IMEX.  Experimental design and hyperparameter spreads are kept consistent across all models to ensure fair comparison.

\begin{table*}[t]

\caption{Test accuracies averaged over five different seeds on UEA time-series classification datasets.  The highest score is indicated in \textbf{bold} and the second highest is \underline{underlined}.  The dataset names are abbreviations of the following UEA datasets: EigenWorms (Worms), SelfRegulationSCP1 (SCP1), SelfRegulationSCP2 (SCP2), EthanolConcentration (Ethanol), Heartbeat, and MotorImagery (Motor).}
\label{tab:uea}

\centering
\resizebox{\textwidth}{!}{
\begin{tabular}{lccccccc}

\toprule

& \textbf{Worms} & \textbf{SCP1} & \textbf{SCP2} & \textbf{Ethanol} & \textbf{Heartbeat} & \textbf{Motor} & \textbf{Avg} \\

Seq. length & 17,984 & 896 &  1,152 & 1,751 &  405 & 3,000 & \\

\# of Classes  &
5 & 2 &  2 & 4 &  2 & 2 & \\

\midrule

NRDE & 
83.9 $\pm$ 7.3 & 
80.9 $\pm$ 2.5 &
53.7 $\pm$ 6.9 &
25.3 $\pm$ 1.8 &
72.9 $\pm$ 4.8 &
47.0 $\pm$ 5.7 &
60.6  \\

NCDE & 
75.0 $\pm$ 3.9 & 
79.8 $\pm$ 5.6 &
53.0 $\pm$ 2.8 &
\underline{29.9 $\pm$ 6.5} &
73.9 $\pm$ 2.6 &
49.5 $\pm$ 2.8 &
60.2 \\

Log-NCDE & 
85.6 $\pm$ 5.1 & 
83.1 $\pm$ 2.8 &
53.7 $\pm$ 4.1 &
\textbf{34.4 $\pm$ 6.4} &
75.2 $\pm$ 4.6 &
53.7 $\pm$ 5.3 &
64.3 \\

LRU & 
87.8 $\pm$ 2.8 & 
82.6 $\pm$ 3.4 &
51.2 $\pm$ 3.6 &
21.5 $\pm$ 2.1 &
\textbf{78.4 $\pm$ 6.7} & 
48.4 $\pm$ 5.0 &
61.7 \\

S5 & 
81.1 $\pm$ 3.7 & 
\textbf{89.9 $\pm$ 4.6} &
50.5 $\pm$ 2.6 &
24.1 $\pm$ 4.3 &
\underline{77.7 $\pm$ 5.5} & 
47.7 $\pm$ 5.5 &
61.8 \\

S6 & 
85.0 $\pm$ 16.1 & 
82.8 $\pm$ 2.7 &
49.9 $\pm$ 9.4 &
26.4 $\pm$ 6.4  &
76.5 $\pm$ 8.3 &
51.3 $\pm$ 4.7 &
62.0 \\

Mamba & 
70.9 $\pm$ 15.8 & 
80.7 $\pm$ 1.4 &
48.2 $\pm$ 3.9 &
27.9 $\pm$ 4.5 &
76.2 $\pm$ 3.8 &
47.7 $\pm$ 4.5 &
58.6 \\

LinOSS-IMEX & 
80.0 $\pm$ 2.7 &
87.5 $\pm$ 4.0 &
\textbf{58.9 $\pm$ 8.1} &
\underline{29.9 $\pm$ 1.0} &
75.5 $\pm$ 4.3 &
57.9 $\pm$ 5.3 &
65.0 \\

LinOSS-IM & 
\textbf{95.0 $\pm$ 4.4} & 
87.8 $\pm$ 2.6 &
58.2 $\pm$ 6.9 &
\underline{29.9 $\pm$ 0.6} &
75.8 $\pm$ 3.7 & 
\underline{60.0 $\pm$ 7.5} &
\underline{67.8} \\

\midrule

D-LinOSS &
\underline{93.9 $\pm$ 3.2} &
\underline{88.9 $\pm$ 3.0} &
\underline{58.6 $\pm$ 2.3} &
\underline{29.9 $\pm$ 0.6} &
75.8 $\pm$ 4.9 &
\textbf{61.1 $\pm$ 2.0} &
\textbf{68.0} \\

\bottomrule
\end{tabular}
}
\end{table*}

\subsection{UEA time-series classification}
\label{results:uea}

We consider a benchmark from the University of East Anglia (UEA) Multivariate Time Series Classification Archive (UEA-MTSCA) \citet{uea} introduced in \citet{walker2024log}.  This benchmark consists of six datasets chosen to evaluate the ability of sequence models to capture long-range interactions.  The UEA datasets are classification tasks, ranging from classifying organisms from motion readings (\emph{EigenWorms}) to classifying fluid alcohol percentage based on measurements of transmissive light spectra (\emph{EthanolConcentration}).  We precisely follow the experimental design proposed in \citet{walker2024log}, conducting a model hyperparameter search over a grid of 162 predetermined configurations for each dataset.  Further, each model instance is trained on five seeds, and the average test accuracy for the top performing model instances are reported.  The high scoring hyperparameter configurations of D-LinOSS model instances are tabulated in Appendix \ref{app:hyperparameters}.  All models use the same 70-15-15 train-validation-test data splits, controlled by the seed for a given trial.  Model scores for LinOSS-IM and LinOSS-IMEX are sourced from \citet{linoss} and all other model scores are sourced from \citet{walker2024log}. 

Out of all models tested, D-LinOSS achieves the highest average test accuracy across the six UEA datasets--raising the previous high score from 67.8\% to 68.0\%.  Notably, D-LinOSS improves state-of-the-art accuracy on MotorImagery by $1.1\%$ and scores in the top two for five out of the six datasets.  D-LinOSS also outperforms the combination of both preceding models: the average score-wise maximum between LinOSS-IM and LinOSS-IMEX is $67.9\%$, still shy of D-LinOSS.  D-LinOSS improves on or matches the second best model, LinOSS-IM, in all but one dataset, EigenWorms, which is the smallest dataset out of the six.

\subsection{PPG-DaLiA time-series regression}
\label{results:ppg}

We evaluate model performance on the \emph{PPG dataset for motion compensation and heart rate estimation in Daily Life Activities} (PPG-DaLiA) \citep{reiss2019deep}, which is a time-series regression task.  Here, models are challenged with learning human heart rate patterns as a function of various sensor measurements, such as ECG readings, wearable accelerometers, and respiration sensing.  The dataset consists of 15 different subjects performing a variety of daily tasks, and bio-sensory data is collected in sequences up to 50,000 points in length.  We follow the same experimental design as before, searching model hyperparameters over a grid of 162 configurations and training each model instance on five seeds.  All models use the same 70-15-15 data split.  D-LinOSS achieves the best results, reducing the lowest MSE from 6.4 to 6.16 ($\times 10^{-2}$) compared to LinOSS-IM.

\begin{table}[t]

\caption{Test accuracies averaged over five different seeds on the PPG-DaLiA time-series regression dataset.  The best score is indicated in \textbf{bold} and the second best is \underline{underlined}.}
\label{tab:ppg}

\vspace{1em}
\centering
\begin{tabular}{lc}

\toprule
Model & MSE $\times 10^{-2}$ \\
\midrule
NRDE \cite{nrde} & 9.90 $\pm$ 0.97 \\
NCDE \citep{kidger2020neural} & 13.54 $\pm$ 0.69 \\
Log-NCDE \cite{walker2024log} & 9.56 $\pm$ 0.59 \\
LRU \cite{lru} & 12.17 $\pm$ 0.49 \\
S5 \cite{s5} & 12.63 $\pm$ 1.25 \\
S6 \cite{mamba} & 12.88 $\pm$ 2.05 \\
Mamba \cite{mamba} & 10.65 $\pm$ 2.20 \\
LinOSS-IMEX \cite{linoss} & 7.5 $\pm$ 0.46 \\
LinOSS-IM \cite{linoss} & \underline{6.4 $\pm$ 0.23} \\
\midrule
D-LinOSS & \textbf{6.16 $\pm$ 0.73}  \\
\bottomrule

\end{tabular}
\end{table}

\subsection{Weather time-series forecasting}
\label{results:wth}

To assess the generality of D-LinOSS as a sequence-to-sequence model, as in \cite{s4}, we evaluate its performance on a long-horizon time-series forecasting task without any architectural modifications.  In this setting, forecasting is framed as a masked sequence transformation problem, allowing the model to predict future values based solely on partially masked input sequences.

We focus on the weather forecasting task introduced in \cite{informer}, which involves predicting one month of hourly measured multivariate local climatological data based on the previous month's measurements.  The dataset spans 1,600 U.S. locations between 2010 to 2013; further details are provided in \citet{weather}. 

In this benchmark, D-LinOSS is compared against Transformer-based architectures, LSTM variants, the structured state-space model S4, and previous versions of LinOSS.  D-LinOSS achieves the best performance, reducing the lowest mean absolute error (MAE) from 0.508 (LinOSS-IMEX) to 0.486.

\begin{table}[t]

\caption{Mean absolute error on the weather dataset predicting the future 720 time steps based on the past 720 time steps. The best score is indicated in \textbf{bold} and the second best is \underline{underlined}.}
\label{tab:wth}

\vspace{1em}
\centering
\begin{tabular}{lc}

\toprule
Model & Mean Absolute Error \\
\midrule
Informer \citep{informer} & 0.731 \\
Informer$^\dagger$ \citep{informer} & 0.741 \\
LogTrans \citep{logtrans} & 0.773 \\
Reformer \citep{reformer} & 1.575 \\
LSTMa \citep{LSTMa} & 1.109 \\
LSTnet \citep{LSTnet} & 0.757 \\
S4 \citep{s4} & 0.578 \\
LinOSS-IMEX \cite{linoss} & \underline{0.508} \\
LinOSS-IM \cite{linoss} & 0.528 \\
\midrule
D-LinOSS & \textbf{0.486} \\
\bottomrule

\end{tabular}
\end{table} 

\subsection{Synthetic Adding}
\label{results:add}

An additional practical benefit of D-LinOSS is faster training convergence compared to preceding models.  To showcase this, we consider the synthetic adding task \citep{hochreiter1997long}, where the model must compute the sum of two randomly specified numbers embedded in a sequence of white noise.  To make the task challenging, models are required to predict the sum using only the final output token, rather than aggregating all outputs across the sequence. We evaluate sequence lengths of 500, 2000, and 5000, and perform a small grid search for all LinOSS variants. Further details are provided in Appendix \ref{app:adding}.

Comparing the best-performing instances, we find that D-LinOSS is not only more capable of reaching a correct solution but it also converges faster.  As shown in Figure \ref{fig:adding}, D-LinOSS attains a validation MSE of $10^{-2}$ about 1.7$\times$ faster than LinOSS-IM for both sequence lengths of 500 and 2000 and about 1.4$\times$ faster for length 5000. In contrast, LinOSS-IMEX fails to outperform the baseline of random guessing (with MSE of 0.167) for any task. These results highlight the greater flexibility and expressivity of D-LinOSS, which translate into more efficient and consistent learning.

\begin{figure}[t]
    \centering
    \begin{subfigure}{0.32\textwidth}
        \centering
        \includegraphics[width=\linewidth]{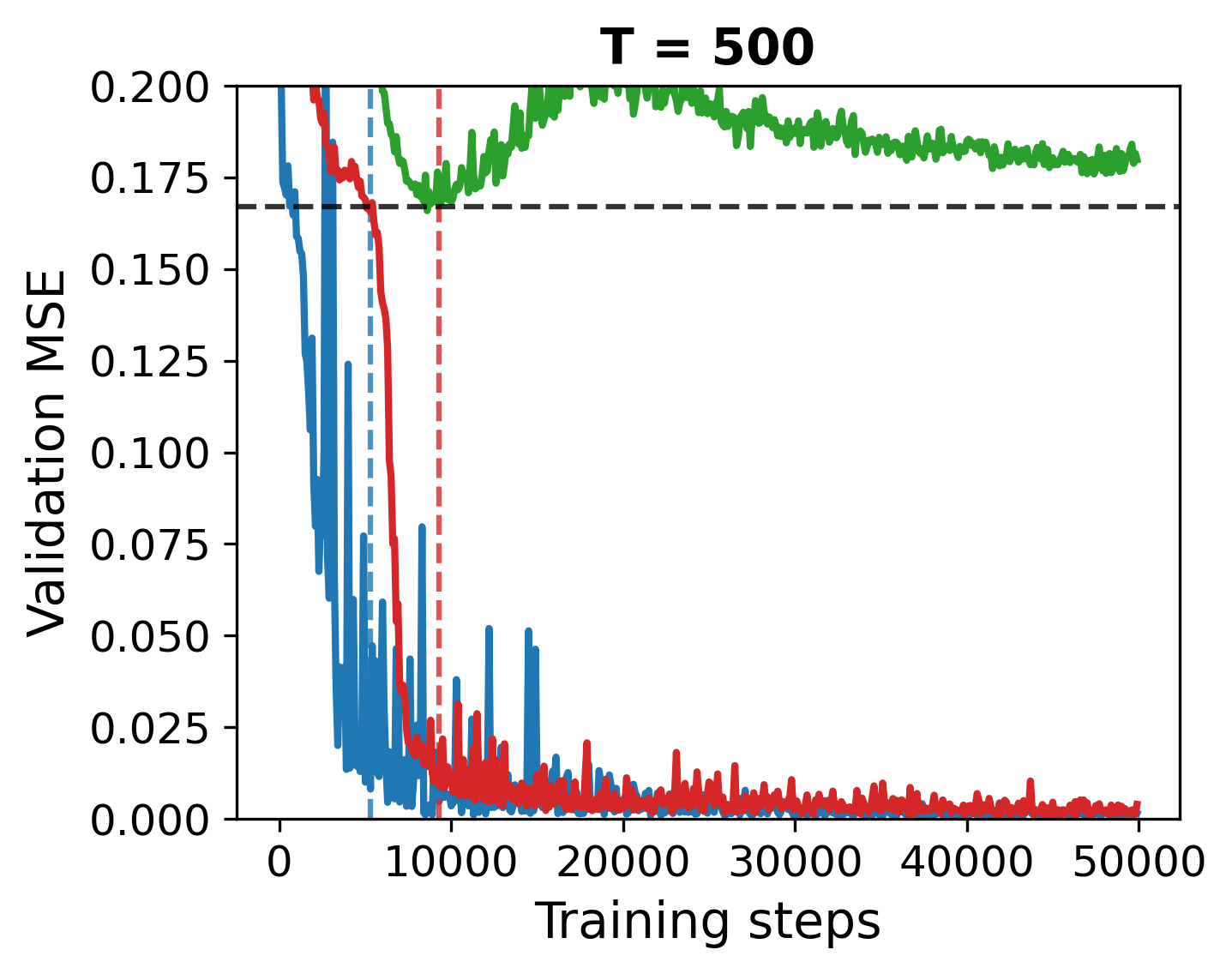}
    \end{subfigure}
    \hfill
    \begin{subfigure}{0.32\textwidth}
        \centering
        \includegraphics[width=\linewidth]{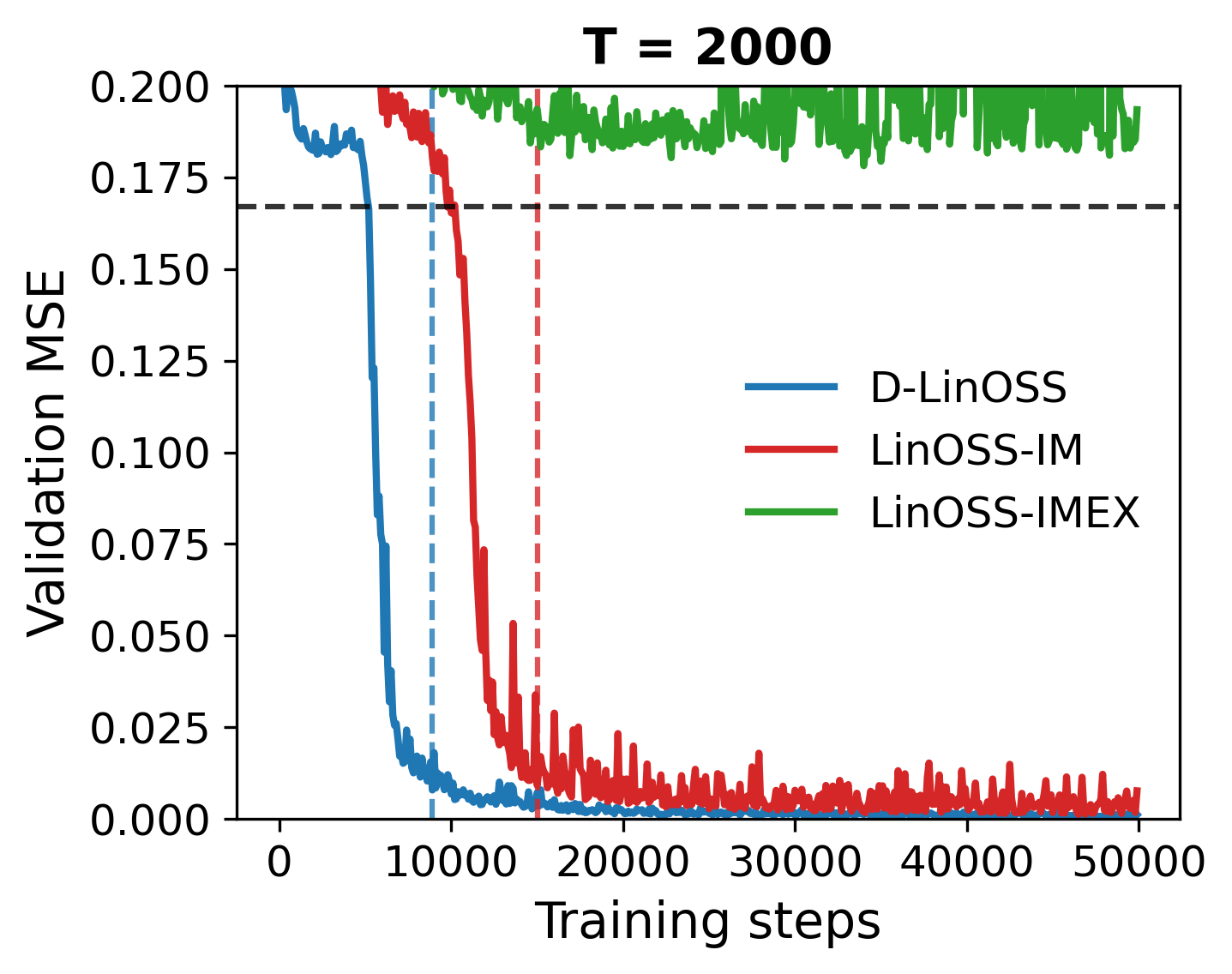}
    \end{subfigure}
    \hfill
    \begin{subfigure}{0.32\textwidth}
        \centering
        \includegraphics[width=\linewidth]{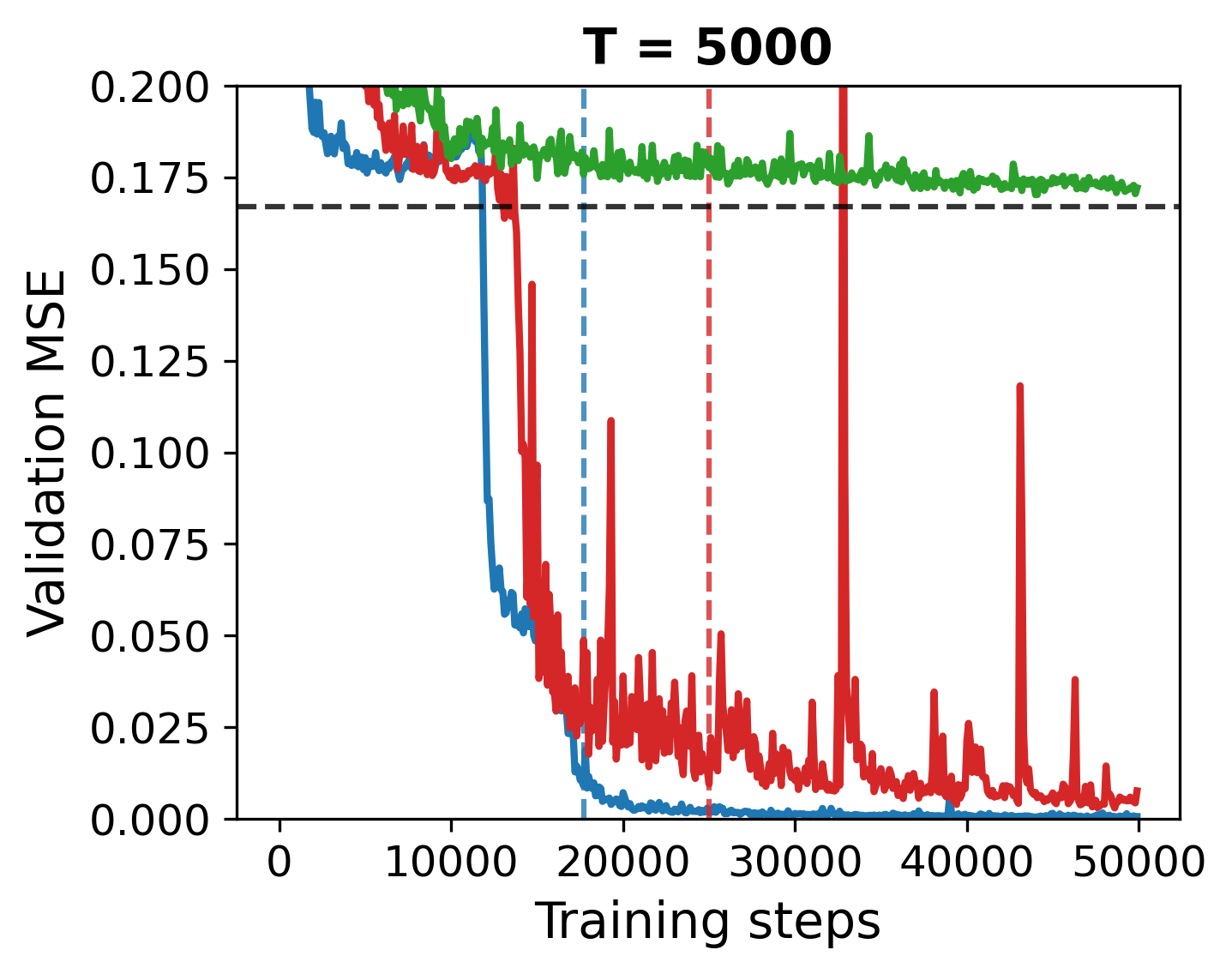}
    \end{subfigure}
    \caption{Validation metric convergence for the adding task of different sequence lengths.}
    \label{fig:adding}
\end{figure}

\section{Related Work}

State Space Models (SSMs) were introduced as a deep learning framework for sequential data in \citet{s4}.  Early variants \citep{gu2022parameterization, nguyen2022s4nd, sashimi} relied on Fast Fourier Transform (FFT) and HiPPO parameterizations \citep{hippo} to efficiently compute linear recurrences.  More recent architectures employ diagonal state matrices in combination with fast associative parallel scans, which was first developed for RNNs \citep{martin2017parallelizing, kaul2020linear} and later adapted to SSMs in \citet{s5}.  While these models initially required HiPPO matrices to initialize weights, subsequent work has shown that simple random initialization is sufficient \citep{lru}.  Finally, D-LinOSS and the models above are based on LTI systems, there is growing interest in time-varying SSMs for challenging domains such as language and video \citep{mamba, dao2024transformersssmsgeneralizedmodels, hasani2022liquid, merrill2024illusion}.

The most closely related model to our proposed D-LinOSS is the original LinOSS, introduced in \citet{linoss}.  While LinOSS was the first SSM to explicitly leverage oscillatory dynamics, the idea of incorporating oscillatory behavior into deep learning architectures has also appeared in other domains.  For instance, recurrent models such as coupled oscillatory RNNs (coRNNs) \citep{cornn} and UnICORNNs \citep{unicornn} introduce oscillations into their hidden state dynamics, while graph-based approaches like Graph Coupled Oscillator Networks (GraphCON) \citep{rusch2022graph} extend similar principles to structured data.

\section{Discussion and conclusion}
\label{sec:conclusion}

We introduced D-LinOSS, an extension of the LinOSS model that incorporates learnable damping across arbitrary time scales.  Through spectral analysis, we showed that existing LinOSS variants are rigidly defined by their discretization scheme and can only express a narrow set of dynamical behaviors.  In contrast, D-LinOSS captures the full range of stable second-order dynamics.

This expanded expressivity yields a 10–30$\times$ improvement on a synthetic regression task, leads to consistent performance gains across eight real-world benchmarks, and enables faster convergence on the adding task.  D-LinOSS outperforms all baselines considered in this work, including Transformer-based models, LSTM variants, other modern SSMs, as well as previous versions of LinOSS.  Additionally, D-LinOSS reduces the LinOSS hyperparameter search space by 50\% without adding any computational overhead.  These results establish D-LinOSS as an efficient and powerful extension to the family of deep state space models.

While D-LinOSS demonstrates strong empirical results as a general sequence model, it is based on layers of LTI systems, which are fundamentally limited in their ability to capture certain contextual dependencies, such as selective copying (\cite{mamba}, \cite{jing2019goru}).  Building on the growing interest in time-varying SSMs sparked by \cite{mamba, dao2024transformersssmsgeneralizedmodels}, we aim to explore future work on variants of D-LinOSS that integrate the selectivity of time-varying dynamics.  

As D-LinOSS is inherently well-suited to represent temporal relationships with oscillatory structure, we aim to explore applications where such patterns are fundamental.  In particular, climate science, seismic monitoring, and astrophysics data all exhibit complex patterns governed by oscillatory behavior.  Moving forward, we believe that D-LinOSS will be a key player in advancing machine-learning based approaches in domains grounded in the physical sciences.

\newpage

\section*{Acknowledgments}
This work was supported in part by the Postdoc.Mobility grant P500PT-217915 from the Swiss National Science Foundation, the Schmidt AI2050 program (grant G-22-63172), and the Department of the Air Force Artificial Intelligence Accelerator and was accomplished under Cooperative Agreement Number FA8750-19-2-1000. The views and conclusions contained in this document are those of the authors and should not be interpreted as representing the official policies, either expressed or implied, of the Department of the Air Force or the U.S. Government. The U.S. Government is authorized to reproduce and distribute reprints for Government purposes notwithstanding any copyright notation herein.

The authors acknowledge the MIT SuperCloud (\cite{reuther2018interactive}) and Lincoln Laboratory Supercomputing Center for providing HPC resources that have contributed to the research results reported within this paper.


\bibliography{iclr2026_conference}
\bibliographystyle{iclr2026_conference}


\newpage
\newpage
\appendix

\begin{center}
{\bf Supplementary Material for:}\\
Learning to Dissipate Energy in Oscillatory State-Space Models
\end{center}

\section{Theoretical properties}

\subsection{D-LinOSS eigenvalues}
\label{app:eigen}

\begin{proposition}
The eigenvalues of the D-LinOSS recurrent matrix $\bM \in \mR^{2m \times 2m}$ are
\begin{equation*}
\lambda_{i_{1,2}} = \frac{1 + \frac{\Delta t_i}{2} \bG_i - \frac{\Delta t_i^2}{2} \bA_i}{1 + \Delta t_i \bG_i} \pm \frac{\frac{\Delta t_i}{2} \sqrt{(\bG_i - \Delta t_i \bA_i)^2  - 4 \bA_i}}{1 + \Delta t_i \bG_i},
\end{equation*}
where pairs of eigenvalues are denoted as $\lambda_{i_{1,2}}$ and $i = 1, 2, ..., m$.
\end{proposition}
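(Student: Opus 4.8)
The plan is to exploit the block structure of $\bM$ in \eqref{eq:dlinoss_matrices}. Since $\bS = \bI + \Delta t\bG$, $\bA$, and $\Delta t$ are all diagonal, every sub-block of $\bM$ is diagonal, so after a permutation of the $2m$ coordinates $\bM$ becomes block-diagonal with $m$ blocks of size $2\times2$; the $i$-th block is
\[
M_i = \begin{bmatrix} \frac{1}{s_i} & -\frac{\Delta t_i \bA_i}{s_i} \\ \frac{\Delta t_i}{s_i} & 1 - \frac{\Delta t_i^2 \bA_i}{s_i} \end{bmatrix}, \qquad s_i := 1 + \Delta t_i \bG_i .
\]
Eigenvalues are invariant under this similarity, so it suffices to find the two eigenvalues of $M_i$ for each $i \in \{1,\dots,m\}$; this is exactly the decoupling used in Proposition~\ref{prop:eigen}.

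For a $2\times2$ matrix one has $\lambda = \tfrac12\,\mathrm{tr}(M_i) \pm \sqrt{\tfrac14\,\mathrm{tr}(M_i)^2 - \det(M_i)}$, so the next step is to compute the trace and determinant of $M_i$. A direct computation gives $\mathrm{tr}(M_i) = \frac{2 + \Delta t_i \bG_i - \Delta t_i^2 \bA_i}{s_i}$, whose half is precisely the first summand in the claimed formula, and — after the two $\Delta t_i^2\bA_i/s_i^2$ terms cancel — $\det(M_i) = \frac{1}{s_i}$. Equivalently, one can bypass inverting the left-hand block matrix of \eqref{eq:dlinoss_discrete_matrix}: inserting the ansatz $\bz_k = \lambda^k \zeta_i$, $\bx_k = \lambda^k \xi_i$ into the scalar recurrence \eqref{eq:dlinoss_discrete} with $\bu \equiv 0$, eliminating $\zeta_i$, and clearing denominators yields the characteristic polynomial $(1+\Delta t_i\bG_i)\lambda^2 - (2 + \Delta t_i\bG_i - \Delta t_i^2\bA_i)\lambda + 1 = 0$ directly; since its constant term is nonzero, $\lambda = 0$ is not a root, which legitimizes the division by $\lambda$.

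It remains to rewrite the discriminant. Writing $a = \Delta t_i\bG_i$ and $b = \Delta t_i^2\bA_i$ (so $s_i = 1+a$), one finds $\tfrac14\,\mathrm{tr}(M_i)^2 - \det(M_i) = \frac{(2+a-b)^2 - 4(1+a)}{4(1+a)^2}$, and expanding the numerator makes all the linear-in-$a$ terms cancel, leaving $(a-b)^2 - 4b = \Delta t_i^2\big[(\bG_i - \Delta t_i\bA_i)^2 - 4\bA_i\big]$. Taking the square root pulls out $\tfrac{\Delta t_i}{2}$ and a factor $s_i = 1+\Delta t_i\bG_i$ in the denominator, producing exactly $\pm\frac{\frac{\Delta t_i}{2}\sqrt{(\bG_i - \Delta t_i\bA_i)^2 - 4\bA_i}}{1+\Delta t_i\bG_i}$; adding the half-trace finishes the proof.

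The argument is elementary throughout; the only place requiring care is the numerator expansion $(2+a-b)^2 - 4(1+a) = (a-b)^2 - 4b$, where the constant and the $a$-linear terms must be tracked so that they cancel, together with correctly extracting the factor $\Delta t_i^2$ from the radicand so that the square root matches the stated form. I would also note explicitly that $\det(M_i) = 1/s_i > 0$ under the standing assumptions, which both excludes a zero eigenvalue here and is reused when analyzing $|\lambda_i|$ in Proposition~\ref{prop:stability}.
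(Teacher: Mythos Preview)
Your proof is correct and follows essentially the same approach as the paper: reduce $\bM$ to $m$ decoupled $2\times2$ blocks via the diagonal sub-block structure, compute the characteristic polynomial of each block (you via the trace--determinant formula, the paper via direct expansion of $\det(\lambda\bI-[\bM]_i)$), and apply the quadratic formula. Your write-up is slightly more explicit about the discriminant simplification $(2+a-b)^2-4(1+a)=(a-b)^2-4b$ and the extraction of $\Delta t_i^2$, which the paper leaves implicit, but the substance is identical.
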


\begin{proof}
$\bM$ \eqref{eq:dlinoss_matrices} is a matrix with diagonal sub-blocks $\bM_{11}$, $\bM_{12}$, $\bM_{21}$, and $\bM_{22}$, i.e. it follows the structure:

\begin{equation*}
\bM = 
\begin{vmatrix}
\begin{bmatrix}
(\bM_{11})_1 & & \\
& \ddots & \\
& & (\bM_{11})_m
\end{bmatrix}
& 
\begin{bmatrix}
(\bM_{12})_1 & & \\
& \ddots & \\
& & (\bM_{12})_m
\end{bmatrix}
\\ \\ 
\begin{bmatrix}
(\bM_{21})_1 & & \\
& \ddots & \\
& & (\bM_{21})_m
\end{bmatrix}
& 
\begin{bmatrix}
(\bM_{22})_1 & & \\
& \ddots & \\
& & (\bM_{22})_m
\end{bmatrix}
\end{vmatrix}
\end{equation*}

Since $\bM$ represents $m$ uncoupled oscillatory systems, we see that the operation $\bM \bw$, $\bw^\T = [\bz^\T, \bx^\T]$ can be independently split across the $m$ pairs of state variables $[\bz_i, \bx_i]$. Thus, it suffices to re-arrange $\bM$ into $m$ different $2 \times 2$ systems and analyze the eigenvalues of each system separately. Substituting the real expressions for $\bM_{11}$, $\bM_{12}$, $\bM_{21}$, and $\bM_{22}$, and using brackets to denote the $2 \times 2$ matrix formed by diagonally indexing along the sub-blocks of $\bM$, the $i^{th}$ system matrix is:

\begin{equation*}
[\bM]_i
=
\begin{bmatrix}
\bS^{-1}_i & - \Delta t_i \bS^{-1}_i \bA_i \\
\Delta t_i \bS^{-1}_i & 1 - \Delta t_i^2 \bS^{-1}_i \bA_i
\end{bmatrix}
\end{equation*}

A straightforward exercise using the definition $\bS = \bI + \Delta t \bG$ leads us to the $i^{th}$ pair of eigenvalues.

\begin{align*}
\det(\lambda \bI - [\bM]_i) &= \det
\begin{bmatrix}
\lambda - \bS^{-1}_i & \Delta t_i \bS^{-1}_i \bA_i \\
- \Delta t_i \bS^{-1}_i & \lambda - 1 + \Delta t_i^2 \bS^{-1}_i \bA_i
\end{bmatrix}
\\
\\
&
= (\lambda - \bS^{-1}_i) (\lambda - 1 + \Delta t_i^2 \bS^{-1}_i \bA_i)
+ (\Delta t_i \bS^{-1}_i) (\Delta t_i \bS^{-1}_i \bA_i )
\\
\\
&= \lambda^2 + \lambda (\bS^{-1}_i(\Delta t^2_i \bA_i - 1) - 1) + \bS^{-1}_i
\\
\\
&
= \lambda^2 + \lambda \frac{- 2 - \Delta t_i \bG_i + \Delta t_i^2 \bA_i}{1 + \Delta t_i \bG_i} + \frac{1}{1 + \Delta t_i \bG_i}
= 0
\end{align*}

\begin{align*}
\implies \lambda_{i_{1,2}} &=
\frac{1 + \frac{\Delta t_i}{2} \bG_i - \frac{\Delta t_i^2}{2} \bA_i}{1 + \Delta t_i \bG_i} \pm \frac{\frac{\Delta t_i}{2} \sqrt{(\bG_i - \Delta t_i \bA_i)^2  - 4 \bA_i}}{1 + \Delta t_i \bG_i}
\end{align*}

\end{proof}






\subsection{Stability criterion}
\label{app:stability}

\begin{proposition}
Assume $\bA_i$, $\bG_i$ are non-negative, and $\Delta t_i \in (0, 1]$. If the following is satisfied:
\begin{equation}
(\bG_i - \Delta t_i \bA_i)^2 \leq 4 \bA_i
\end{equation}
Then $\lambda_{i_{1,2}}$ come in complex conjugate pairs $\lambda_i$, $\lambda_i^*$ with the following magnitude:
\[
|\lambda_i| = \frac{1}{\sqrt{1 + \Delta t_i \bG_i}} \leq 1,
\]
i.e. the eigenvalues are unit-bounded. Define $\mathcal{S}_i$ to be the set of all $(\bA_i, \bG_i)$ that satisfy the above condition. For notational convenience, we order the eigenvalues such that $\text{\emph{Im}}(\lambda_i) \geq 0$, $\text{\emph{Im}}(\lambda_i^*) \leq 0$.
\end{proposition}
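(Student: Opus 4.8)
The plan is to combine two simple observations: the hypothesis \eqref{eq:criterion} forces the discriminant appearing in Proposition~\ref{prop:eigen} to be non-positive, and the product of the two eigenvalues of each $2\times 2$ block is pinned down by the constant term of that block's characteristic polynomial. I would begin by recording that $1 + \Delta t_i \bG_i \geq 1 > 0$, which holds because $\bG_i \geq 0$ and $\Delta t_i \in (0,1]$; this is the only place the sign hypotheses are actually used, and it guarantees that every denominator in Proposition~\ref{prop:eigen} is well-defined and strictly positive.

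Next I would set $D_i := (\bG_i - \Delta t_i \bA_i)^2 - 4\bA_i$ and observe that \eqref{eq:criterion} says exactly $D_i \leq 0$. Hence $\sqrt{D_i} = j\sqrt{-D_i}$ with $-D_i \geq 0$, so the $\pm$ term in the eigenvalue formula of Proposition~\ref{prop:eigen} is purely imaginary (or zero). Writing $\rho_i$ for the shared real part and $\omega_i \geq 0$ for the modulus of the imaginary part, we get $\lambda_{i_{1,2}} = \rho_i \pm j\omega_i$, a complex-conjugate pair $\lambda_i, \lambda_i^*$ — degenerating to a real double root when $D_i = 0$, which is still its own conjugate. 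Ordering the pair by the sign of the imaginary part is then a harmless labelling choice consistent with the stated convention.

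For the magnitude I would avoid expanding $\rho_i^2 + \omega_i^2$ directly and instead apply Vieta's formulas to the monic characteristic polynomial already obtained in the proof of Proposition~\ref{prop:eigen},
\[
\lambda^2 + \frac{-2 - \Delta t_i\bG_i + \Delta t_i^2\bA_i}{1+\Delta t_i\bG_i}\,\lambda + \frac{1}{1+\Delta t_i\bG_i} = 0,
\]
whose two roots multiply to the constant term $\tfrac{1}{1+\Delta t_i\bG_i}$. Since $\lambda_i^* = \overline{\lambda_i}$, this product is $|\lambda_i|^2$, giving $|\lambda_i| = 1/\sqrt{1+\Delta t_i\bG_i}$, and $|\lambda_i| \leq 1$ then follows from the first step. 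A self-contained alternative is to expand $\rho_i^2 + \omega_i^2$ and verify that the numerator collapses to $1 + \Delta t_i\bG_i$; this is more tedious, and the Vieta argument makes the cancellation transparent while also covering the degenerate case with no extra work.

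I do not anticipate a genuine obstacle. The only points requiring a little care are the degenerate case $D_i = 0$ — handled uniformly by the Vieta step, which never uses distinctness of the roots — and invoking the sign assumptions on $\bG_i$ and $\Delta t_i$ precisely where they are needed, namely to secure $1 + \Delta t_i\bG_i \geq 1$ (and hence both the validity of the quotient and the final bound $|\lambda_i|\le 1$).
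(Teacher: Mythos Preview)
Your argument is correct and follows the same overall skeleton as the paper's proof: the hypothesis forces a non-positive discriminant, hence a conjugate pair, and then one computes the common modulus and bounds it by $1$ using $\bG_i,\Delta t_i \ge 0$. The one substantive difference is in how the modulus is obtained. The paper writes out $\mathrm{Re}(\lambda_i)$ and $\mathrm{Im}(\lambda_i)$ explicitly and asserts $\sqrt{\mathrm{Re}(\lambda_i)^2 + \mathrm{Im}(\lambda_i)^2} = (1+\Delta t_i\bG_i)^{-1/2}$ without displaying the algebra; you instead read off $\lambda_i\lambda_i^* = |\lambda_i|^2$ as the constant term of the monic characteristic polynomial via Vieta. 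Your route is cleaner---it makes the cancellation evident, avoids any computation of the numerator of $\rho_i^2+\omega_i^2$, and covers the repeated-root case $D_i=0$ without a separate check---while the paper's direct expansion is more pedestrian but entirely equivalent.
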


\begin{proof}
$(\bG_i - \Delta t_i \bA_i)^2  - 4 \bA_i$ is the determinant of each eigenvalue pair, so $(\bG_i - \Delta t_i \bA_i)^2  \leq 4 \bA_i$ means $\lambda_i$ can be written in complex form with the following real and imaginary components.

\begin{align*}
\text{Re}(\lambda_i) &= \frac{1 + \frac{\Delta t_i}{2} \bG_i - \frac{\Delta t_i^2}{2} \bA_i}{1 + \Delta t_i \bG_i}
\\
\\
\text{Im}(\lambda_i) &= \pm \frac{\frac{\Delta t_i}{2} \sqrt{4 \bA_i - (\bG_i - \Delta t_i \bA_i)^2}}{1 + \Delta t_i \bG_i} 
\end{align*}

The magnitude of this complex number is:

\[
|\lambda_i| = \sqrt{\text{Re}(\lambda_i)^2 + \text{Im}(\lambda_i)^2} = \frac{1}{\sqrt{1 + \Delta t_i \bG_i}}
\]
\\
\[
\Delta t_i, \bG_i \geq 0 \implies |\lambda_i| \leq 1
\]

We note that this stability criterion is a sufficient but not necessary condition. There exists solutions $(\bA_i, \bG_i)$ rendering $|\lambda_i| \leq 1$ that do not lie in $\mathcal{S}_i$. However, as shown in Proposition \ref{prop:bijective}, there already exists a bijection from $\mathcal{S}_i$ to the full complex unit disk.

\end{proof}

\subsection{Spectral image of D-LinOSS}
\label{app:bijectivity}

\begin{proposition}
The mapping $\Phi : \mathcal{S}_i \rightarrow \mC_{|z|\leq1}$ defined by $(\bA_i, \bG_i) \mapsto (\lambda_{i}, \lambda_{i}^*)$ is bijective.
\end{proposition}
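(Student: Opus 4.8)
The plan is to prove bijectivity by exhibiting the inverse $\Phi^{-1}$ explicitly, and then checking three things: that $\Phi$ itself is well defined, that the formulas for $\Phi^{-1}$ actually land in $\mathcal{S}_i$, and that the two composites are the identity. Throughout, $\Delta t_i > 0$ is treated as a fixed parameter. The only facts I need are from Proposition~\ref{prop:stability}: on $\mathcal{S}_i$ the two eigenvalues form a conjugate pair with
\[
|\lambda_i|^2 = \frac{1}{1 + \Delta t_i \bG_i}, \qquad \text{Re}(\lambda_i) = \frac{1 + \tfrac{\Delta t_i}{2}\bG_i - \tfrac{\Delta t_i^2}{2}\bA_i}{1 + \Delta t_i \bG_i},
\]
together with the ordering convention $\text{Im}(\lambda_i)\ge 0$. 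Well-definedness of $\Phi$ is immediate: Proposition~\ref{prop:stability} already gives $|\lambda_i|\le 1$, so the image lies in $\mC_{|z|\le 1}$. These two scalar identities are the system I would invert.

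For injectivity and the construction of $\Phi^{-1}$, I would solve that system for $(\bA_i,\bG_i)$ in terms of $\lambda_i$. The magnitude identity gives $1 + \Delta t_i\bG_i = 1/|\lambda_i|^2$, which (using $\Delta t_i\neq 0$) recovers $\bG_i$ uniquely; substituting into the real-part identity and again using $\Delta t_i\neq 0$ recovers $\bA_i$ uniquely. Carrying out the elimination yields the candidate inverse
\[
\bG_i = \frac{1 - |\lambda_i|^2}{\Delta t_i\,|\lambda_i|^2}, \qquad \bA_i = \frac{|\lambda_i - 1|^2}{\Delta t_i^2\,|\lambda_i|^2},
\]
which I would take as the definition of $\Phi^{-1}(\lambda_i,\lambda_i^*)$. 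Uniqueness of the solution is exactly the statement that $\Phi$ is injective, so this step does double duty.

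Next I would verify $\Phi^{-1}$ maps into $\mathcal{S}_i$ and inverts $\Phi$ on both sides. Non-negativity is clear: $|\lambda_i|\le1$ gives $\bG_i\ge0$, and $\bA_i\ge0$ since its numerator is a squared modulus. The one genuine computation is the discriminant inequality $(\bG_i - \Delta t_i\bA_i)^2 \le 4\bA_i$ defining $\mathcal{S}_i$: substituting the formulas above and writing $r^2=|\lambda_i|^2$, $a=\text{Re}(\lambda_i)$, all the $\Delta t_i$'s and a factor of $4$ cancel and the inequality collapses to $a^2\le r^2$, i.e. $\text{Re}(\lambda_i)^2\le|\lambda_i|^2$, which always holds (with equality precisely when $\lambda_i$ is real, matching the boundary of $\mathcal{S}_i$ where the eigenvalues coincide). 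Then $\Phi\circ\Phi^{-1}=\mathrm{id}$ follows by plugging the formulas back into the magnitude and real-part expressions of Proposition~\ref{prop:stability} and recovering $|\lambda_i|$ and $\text{Re}(\lambda_i)$, hence $\lambda_i$ via $\text{Im}(\lambda_i)\ge0$; and $\Phi^{-1}\circ\Phi=\mathrm{id}$ is the uniqueness argument from the previous paragraph.

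The point requiring the most care is the edge case $\lambda_i = 0$: the magnitude identity forces $|\lambda_i| = 1/\sqrt{1+\Delta t_i\bG_i} > 0$ for every finite $\bG_i$, so $0$ is never attained and the formulas for $\Phi^{-1}$ are singular there. I would handle this by describing the target as the punctured disk (equivalently, noting that $0$ is an unreachable boundary point of measure zero), after which the division by $|\lambda_i|^2$ in $\Phi^{-1}$ is harmless. A secondary bookkeeping remark is that, since $\lambda_i^*$ is determined by $\lambda_i$ and the convention $\text{Im}(\lambda_i)\ge0$ pins down the representative, $\Phi$ is effectively a function of the single point $\lambda_i$ in the (upper half of the) disk, so arguing in terms of $\lambda_i$ alone is legitimate. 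I expect the discriminant check to be the only place that needs real algebra, and as noted it simplifies cleanly to $\text{Re}(\lambda_i)^2\le|\lambda_i|^2$.
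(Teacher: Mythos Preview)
Your approach is essentially the same as the paper's: both construct the explicit inverse (your formulas $\bG_i = (1-|\lambda_i|^2)/(\Delta t_i|\lambda_i|^2)$ and $\bA_i = |\lambda_i-1|^2/(\Delta t_i^2|\lambda_i|^2)$ are exactly the paper's $\bG_i = (1-\lambda_i\lambda_i^*)/(\Delta t_i\lambda_i\lambda_i^*)$ and $\bA_i = (\lambda_i\lambda_i^*-\lambda_i-\lambda_i^*+1)/(\Delta t_i^2\lambda_i\lambda_i^*)$ rewritten), verify non-negativity, and reduce the discriminant inequality to a triviality---the paper simplifies it to $(\lambda_i-\lambda_i^*)^2/(\Delta t_i\lambda_i\lambda_i^*)^2 = -4\,\text{Im}(\lambda_i)^2/(\Delta t_i\lambda_i\lambda_i^*)^2 \le 0$, which is equivalent to your $\text{Re}(\lambda_i)^2 \le |\lambda_i|^2$. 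You are in fact more careful than the paper on one point: you correctly flag that $\lambda_i = 0$ is never attained (so the target is really the punctured disk), an edge case the paper's proof silently passes over.
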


\begin{proof} 
Assuming constant $\Delta t_i \in (0, 1]$, it can be checked via symbolic manipulation that the following expression is the inverse of the eigenvalues from Proposition \ref{prop:eigen}:

\[
\bA_i = \frac{\lambda_{i}\lambda^*_{i} - \lambda_{i} - \lambda^*_{i} + 1}{\Delta t_i^2 \lambda_{i}\lambda^*_{i}}, \ \bG_i = \frac{1 -\lambda_{i} \lambda^*_{i}}{\Delta t_i \lambda_{i} \lambda^*_{i}}
\]

Proposition \ref{prop:stability} has already shown that $\Phi(\mathcal{S}_i) \subseteq \mC_{|z|\leq1}$, so it remains to show that $\Phi^{-1}(\mC_{|z|\leq1}) \subseteq \mathcal{S}_i$ to conclude that the mapping is bijective between $\mathcal{S}_i$ and $\mC_{|z|\leq1}$.

It's clear that $\bA_i$, $\bG_i$ are non-negative as the numerators are denominators of the above expressions are both non-negative given $|\lambda_i| \leq 1$.  The final condition simplifies to:

\[
(\bG_i - \Delta t_i \bA_i)^2 - 4 \bA_i \leq 0 \iff 
\frac{(\lambda_i - \lambda_i^*)^2}{(\Delta t_i \lambda_i \lambda_i^*)^2} = \frac{-4 \text{Im}(\lambda_i)^2}{(\Delta t_i \lambda_i \lambda_i^*)^2} \leq 0
\]

which is also true.

\end{proof}

\begin{figure}[t]
    \centering
    \includegraphics[width=0.5\textwidth]{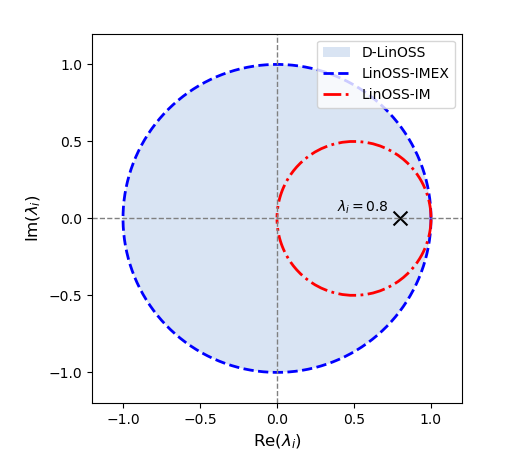}
    \caption{Reachable eigenvalue sets for the different oscillatory SSMs.  The eigenvalue $\lambda=0.8$ of the exponential decay experiment from Section \ref{sec:motivation} only lies in the spectral range of D-LinOSS.}
    \label{fig:eigen}
\end{figure}

\subsection{Set measure of LinOSS eigenvalues}
\label{app:measure}

\begin{proposition}
For both LinOSS-IM and LinOSS-IMEX, the set of eigenvalues constructed from $\bA_i \in \mR_{\geq 0}$ and $\Delta t_i \in (0,1]$ is of measure zero in $\mC$.
\end{proposition}

\begin{proof}
Recall the eigenvalue expressions from \citet{linoss}:

\[
\lambda^\text{IM}_{i_{1,2}} = \frac{1}{1 + \Delta t_i^2 \bA_i} \pm j\frac{\Delta t_i \sqrt{\bA_i}}{1 + \Delta t_i^2 \bA_i},
\quad 
\lambda^\text{IMEX}_{i_{1,2}} = \frac{1}{2}(2- \Delta t_i^2 \bA_i) \pm \frac{j}{2} \sqrt{\Delta t_i^2 \bA_i (4 - \Delta t_i^2 \bA_i)},
\]

Since $\lambda_{i_1} = \lambda_{i_2}^*$, it suffices to prove the proposition for the first eigenvalue, i.e. $\{\lambda_{i_1} \in \mC \ | \ \bA_i \in \mR_{\geq 0}, \Delta t_i \in (0,1] \}$ is a set of measure zero.  We start with the following lemma:

\begin{lemma}
Let $f : M \rightarrow N$ be a continuously differentiable map of manifolds where $dimM < dimN$. Then $f(M)$ is of measure zero in N \citep{Gualtieri2016}.
\label{lemma:measure}
\end{lemma}

Using Lemma \ref{lemma:measure}, it suffices to show that range($\lambda_{i_1}^\text{IM}$) is a 1-manifold in $\mC \cong \mR^2$. Beginning with the LinOSS-IM expression, apply the change of variables $\gamma_i := \Delta t_i \sqrt{\bA_i}$, where $\bA_i \in \mR_{\geq 0}, \Delta t_i \in (0,1] \iff \gamma_i \in \mR_{\geq 0}$:

\begin{equation*}
    \lambda_{i_1}^\text{IM}(\gamma_i) = \bigg( \frac{1}{1 + \gamma_i^2}, \frac{\gamma_i}{1 + \gamma_i^2} \bigg)
\end{equation*}

This map is continuously differentiable, injective on $\mC$ (and surjective onto its image), and its inverse is continuously differentiable, so it satisfies the conditions of Lemma \ref{lemma:measure}.  So, the range of $\lambda_{i_1}^\text{IM}$ is an embedded 1-manifold in $\mR^2$ and therefore a set of measure zero.

Using the same change of variables, an identical argument can be applied to the LinOSS-IMEX expression to show it also produces a set of measure zero; this argument is omitted for concision.

\end{proof}

\subsection{Universality}
\label{app:universality} 

An operator $\Phi:C_0([0,T];\mR^p) \rightarrow C_0([0,T];\mR^q)$ is said to be \emph{causal} if $\forall t\in [0,T]$, if $\bu, \bv \in C_0([0,T];\mR^p)$ are two input signals such that $\bu|_{[0,t]} \equiv \bv|_{[0,t]}$, then $\Phi(\bu)(t) = \Phi(\bv)(t)$. 

An operator is said to be \emph{continuous} if $\Phi: (C_0([0,T];\mR^p), \Vert {\,\cdot\,} \Vert_\infty) \to (C_0([0,T];\mR^q), \Vert {\,\cdot\,} \Vert_\infty)$, i.e. $\Phi$ is a map between continuous functions with respect to the $L^\infty$-norms on the input/output signals.

The theorem of \citet{linoss} is briefly paraphrased:

\begin{theorem}
\label{thm:unvsl}
Let $\Phi$ be any causal and continuous operator. Let $K \subset C_0([0,T];\mR^p)$ be compact. Then for any $\epsilon > 0$, there exists a configuration of hyperparameters and weight matrices, such that the output $\bz: [0,T]\to \mR^q$ of a LinOSS block satisfies:

\[
\sup_{t\in [0,T]} | \Phi(\bu)(t) - \bz(t) | \le \epsilon, \quad \forall \, \bu\in K.
\]

\end{theorem}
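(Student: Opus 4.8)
The plan is to decompose a LinOSS block into its two functional ingredients --- a bank of linear, time-invariant filters (the oscillatory SSM recurrence) followed by the pointwise nonlinearity and linear mixing layer that complete the block --- and to show that the first ingredient is expressive enough to extract an essentially finite-dimensional summary of the input history, while the second is (a slice of) an MLP and hence a universal approximator of continuous maps. Universality of $\Phi$ then follows by composing a quantitative version of each fact and splitting the error budget $\epsilon$ across three sources: truncation of $\Phi$ to finitely many features, approximation of the resulting continuous readout by the nonlinear layer, and the discretization error incurred by replacing the continuous oscillator ODE \eqref{eq:second_order_system} by its IMEX recurrence.

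First I would reduce the operator. Since $K \subset C_0([0,T];\mR^p)$ is compact it is uniformly bounded and equicontinuous (Arzel\`a--Ascoli), so every $\bu \in K$ is uniformly well approximated by its convolutions against a fixed finite family of smooth causal kernels $\{h_1,\dots,h_N\}$; equivalently the feature map $\bu \mapsto (h_1 * \bu, \dots, h_N * \bu)$ is injective up to a prescribed $\delta$ on $K$. Combining this with causality and continuity of $\Phi$ --- so that $\Phi(\bu)(t)$ depends, up to $\epsilon/3$, only on a continuous function $g$ of the feature vector $\big((h_1*\bu)(t),\dots,(h_N*\bu)(t)\big)$ and of the time variable $t$ --- yields an operator $\tilde\Phi(\bu)(t) = g\big(t,(h_1*\bu)(t),\dots,(h_N*\bu)(t)\big)$ with $\sup_{\bu\in K}\|\Phi(\bu)-\tilde\Phi(\bu)\|_\infty \le \epsilon/3$. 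This step is the structural core of the argument and the place where I expect the real difficulty to lie: it is exactly where one must invoke (or reprove) the approximation theorem for causal continuous operators that \citet{linoss, neural_oscil} build on, and it is delicate because the effective dimension $N$ and the kernels must be chosen uniformly over all of $K$ rather than pointwise.

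Next I would realize the two layers separately. For the filter bank: in a single channel the (discretized) oscillatory recurrence has a damped-sinusoid impulse response, and finite linear combinations of such responses across many channels are dense --- by a Stone--Weierstrass / density-of-damped-sinusoids argument, using that $\|h*\bu - \hat h * \bu\|_\infty \le \|h-\hat h\|_{L^1}\|\bu\|_\infty$ --- in the space of kernels needed above, so that with enough oscillators and an appropriate linear readout the SSM output reproduces $(h_1*\bu,\dots,h_N*\bu)$ to accuracy $\epsilon/3$ uniformly on $[0,T]$ and uniformly over $K$; the stable parameterization of Proposition \ref{prop:stability} guarantees these channels stay well-behaved. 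For the readout: $g$ is continuous on the compact set obtained as the image of $K$ under the feature map together with the interval $[0,T]$ (with $t$ fed in as an extra constant input channel), so by the classical universal approximation theorem the block's pointwise nonlinearity plus its final linear layer --- widened or stacked as needed --- approximate $g$ to accuracy $\epsilon/3$.

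Finally I would assemble the pieces: choose the number of discretization steps large enough that the IMEX recurrence approximates the continuous-time oscillator flow on $[0,T]$ within the allotted tolerance (using conditional stability so the discrete trajectory does not blow up), pick the number of oscillatory channels and the readout width from the two density statements, and conclude by the triangle inequality that $\sup_{t\in[0,T]}|\Phi(\bu)(t)-\bz(t)| \le \epsilon$ for all $\bu\in K$. The same argument applies verbatim to D-LinOSS, since setting $\bG=\bzero$ recovers the LinOSS-IMEX recurrence, so the damped model needs no separate proof.
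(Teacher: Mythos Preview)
The paper does not give a proof of this theorem. It is presented in Appendix~\ref{app:universality} as a paraphrase of the universality result from \citet{linoss}, with the argument outsourced entirely to that reference and to \citet{neural_oscil}; the only addition in the present paper is the one-line observation (which you already make at the end of your proposal) that the result carries over to D-LinOSS by setting $\bG=\bzero$.

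Your proposal therefore goes well beyond what the paper does, and the overall architecture you sketch --- extract a finite-dimensional causal summary of the input via an LTI filter bank, approximate the required kernels by linear combinations of the oscillator impulse responses, approximate the residual continuous readout by the block's nonlinear layer via the classical universal approximation theorem, and split $\epsilon$ across the stages --- is broadly faithful to how the cited references argue. You also correctly identify the first reduction (from a general causal continuous operator to a finite feature map) as the hard step and explicitly say you would invoke the operator-approximation machinery of \citet{linoss, neural_oscil} there; in that sense your proof, like the paper's, is ultimately a citation, just with more scaffolding around it. One small mismatch: the theorem as stated concerns the continuous-time output $\bz:[0,T]\to\mR^q$, so the universality in the source is established at the ODE level before discretization, and your final IMEX/discretization-error stage is an extra layer not required by the statement as written.
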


In other words, a LinOSS block can approximate any causal and continuous operator on compact input spaces to arbitrarily high accuracy.

\section{Experiments and results}

\subsection{Regression experiment}
\label{app:decay}

A small hyperparameter grid search was conducted over hidden dimension $\in \{8, 64 \}$, SSM dimension $\in \{8, 64\}$, and number of blocks $\in \{2, 6 \}$. Learning rate was kept constant at 1e-3. 

Input sequences were sampled from white noise and passed through the discrete state-space system corresponding to parameters $A = 0.8, B = 1, C = 1, D = 0$. A sequence length of $1000$ was selected to study the model behaviors within the regime of long range learning. 

\subsection{Adding task}
\label{app:adding}

Models were varied across number of blocks $\in \{2, 4, 6\}$ while keeping constant hidden dimension $= 128$, SSM dimension $= 128$, and learning rate $=$ 1e-4.  Each model configuration was trained on 5 different seeds and the highest performing model instances (as measured by fastest validation set convergence time) are compared in Figure \ref{fig:adding}.

\subsection{Hyperparameters}
\label{app:hyperparameters}

For the UEA-MTSCA classification and PPG regression experiments of Section~\ref{results:uea} and Section~\ref{results:ppg}, model hyperparameters were searched over a grid of 162 total configurations defined by \cite{walker2024log}. This grid consists of learning rate $\in \{$1e-3, 1e-4, 1e-5$\}$, hidden dimension $\in \{16, 64, 128\}$, SSM dimension $\in \{16, 64, 256\}$, number of SSM blocks $\in \{2, 4, 6\}$, and inclusion of time $\in \{$True, False$\}$. For the weather forecasting experiment of Section~\ref{results:wth}, we instead perform random search over the same grid, except we sample learning rate continuously from a log-uniform distribution and allow for odd-numbered selections of the number of blocks.

\begin{table}[h!]
\caption{Best performing hyperparameters for D-LinOSS across each of the eight datasets.}
\vskip1em
\label{tab:hps}
\centering
\begin{tabular}{llccccc}
\toprule
Dataset & lr & hidden dim & state dim & num blocks & include time \\
\midrule
\textbf{Worms} & 1e-3 & 128 & 64 & 2 & False \\
\midrule
\textbf{SCP1} & 1e-4 & 128 & 256 & 6 & True \\
\midrule   
\textbf{SCP2} & 1e-5 & 128 & 64 & 6 & False \\
\midrule
\textbf{Ethanol} & 1e-5 & 16 & 256 & 4 & False \\
\midrule
\textbf{Heartbeat} & 1e-4 & 16 & 16 & 2 & False \\
\midrule
\textbf{Motor} & 1e-3 & 16 & 64 & 4 & False \\
\midrule
\textbf{PPG} & 1e-3 & 64 & 64 & 4 & True \\
\midrule
\textbf{Weather} & 7.95e-5 & 128 & 128 & 3 & False \\
\midrule
\bottomrule
\end{tabular}
\end{table}

\subsection{Compute requirements}
\label{app:compute}

Below, we tabulate the compute resources required for each model across all datasets considered in the UEA-MTSCA classification experiments of Section~\ref{results:uea}. The main table is sourced from \cite{linoss}, which lists the total number of parameters, average GPU memory usage measured in MB, and average run time per 1000 training steps measured in seconds. All models operate on the same codebase and python libraries, adopted from both \cite{walker2024log} and \cite{linoss}. These compute requirements are evaluated using an Nvidia RTX 4090 GPU.

\begin{table}[h!]
\caption{Compute requirements for the classification experiments considered in Section~\ref{results:uea}.}
\vskip1em
\label{tab:mem}

\centering
\resizebox{\textwidth}{!}{
\begin{tabular}{lccccccccccc}
\toprule

\textbf{} & & {NRDE} & {NCDE} & {Log-NCDE} & {LRU} & {S5} & {Mamba} & {S6} & {LinOSS-IMEX} & {LinOSS-IM} & {D-LinOSS} \\

\midrule

\multirow{3}{*}{\textbf{Worms}} 
& $\lvert\theta\rvert$ & 105110 & 166789 & 37977 & 101129 & 22007 & 27381 & 15045 & 26119 & 134279 & 134279 \\
& mem. & 2506 & 2484 & 2510 & 10716 & 6646 & 13486 & 7922 & 6556 & 3488 & 3488 \\
& time & 5386 & 24595 & 1956 & 94 & 31 & 122 & 68 & 37 & 14 & 14 \\

\midrule

\multirow{3}{*}{\textbf{SCP1}} 
& $\lvert\theta\rvert$ & 117187 & 166274 & 91557 & 25892 & 226328 & 184194 & 24898 & 447944 & 991240 & 992776 \\
& mem. & 716 & 694 & 724 & 960 & 1798 & 1110 & 904 & 4768 & 4772 & 4790 \\
& time & 1014 & 973 & 635 & 9 & 17 & 7 & 3 & 42 & 38 & 38 \\

\midrule

\multirow{3}{*}{\textbf{SCP2}} 
& $\lvert\theta\rvert$ & 200707 & 182914 & 36379 & 26020 & 5652 & 356290 & 26018 & 448072 & 399112 & 399496 \\
& mem. & 712 & 692 & 714 & 954 & 762 & 2460 & 1222 & 4772 & 2724 & 2736 \\
& time & 1404 & 1251 &583 & 9 & 9 & 32 & 7 & 55 & 22 & 22 \\

\midrule

\multirow{3}{*}{\textbf{Ethanol}} 
& $\lvert\theta\rvert$ & 93212 & 133252 & 31452 & 76522 & 76214 & 1032772 & 5780 & 70088 & 6728 & 71112 \\
& mem. & 712 & 692 & 710 & 1988 & 1520 & 4876 & 938 & 4766 & 1182 & 4774 \\
& time &  2256 & 2217 & 2056 & 16 & 9 & 255 & 4 & 48 & 8 & 37 \\

\midrule

\multirow{3}{*}{\textbf{Heartbeat}} 
& $\lvert\theta\rvert$ & 15657742 & 1098114 & 168320 & 338820 & 158310 & 1034242 & 6674 & 29444 & 10936 & 4356 \\
& mem. & 6860 & 1462 & 2774 & 1466 & 1548 & 1650 & 606  & 922 & 928 & 672 \\
& time & 9539 & 1177 & 826 & 8 & 11 & 34 & 4 & 4 & 7 & 4 \\

\midrule

\multirow{3}{*}{\textbf{Motor}} 
& $\lvert\theta\rvert$ & 1134395 & 186962 & 81391 & 107544 & 17496 & 228226 & 52802 & 106024 & 91844 & 20598 \\
& mem. & 4552 & 4534 & 4566 & 8646 & 4616 & 3120 & 4056 & 12708 & 4510 & 4518 \\
& time & 7616 & 3778 & 730 & 51 & 16 & 35 & 34 & 128 & 11 & 20 \\

\bottomrule

\end{tabular}
}
\end{table}

\subsection{Initialization techniques}
\label{app:study}

\begin{figure}[h!]
\centering
\includegraphics[width=0.75\textwidth]{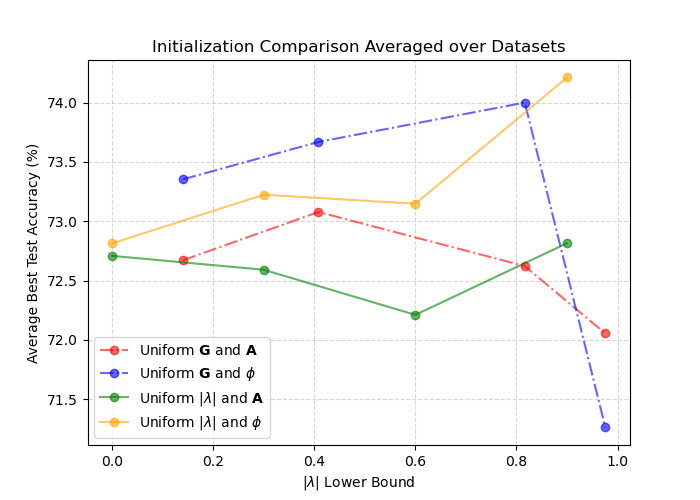}
\caption{Initialization study varying intervals of eigenvalue magnitude and methods of sampling.}
\label{fig:initialization}
\end{figure}

To understand how to best initialize the D-LinOSS parameters, we conduct a study on model performance comparing four different sampling techniques.

\begin{itemize}
    \item Experiment 1: Uniformly sample $\bG \in [0, G_{\max}]$ for different values of $G_{\max}$ and uniformly sample $\bA \in [0, 1]$ 
    \item Experiment 2: Uniformly sample $\bG \in [0, G_{\max}]$ for different values of $G_{\max}$ and uniformly sample $\phi \in [0, \pi)$
    \item Experiment 3: Radially uniform sample $|\lambda| \in [r_{\min}, 1]$ for different values of $r_{\min}$ and uniformly sample $\bA \in [0, 1]$ 
    \item Experiment 4: Radially uniform sample $|\lambda| \in [r_{\min}, 1]$ for different values of $r_{\min}$ and uniformly sample $\phi \in [0, \pi)$ 
\end{itemize}

Where sampling $\bG$ or sampling $|\lambda|$ are two different ways of controlling eigenvalue magnitude, and sampling $\bA$ or sampling $\phi:=\arg(\lambda)$ are two different ways of controlling eigenvalue phase.  Note that eigenvalues come in complex conjugate pairs, so sampling a single eigenvalue in $\phi \in [0, \pi)$ covers the full complex disk.  ``Radially uniform sampling" refers to uniformly sampling over the area of the cartesian coordinate ring described by the polar coordinate boundaries as opposed to uniformly sampling in the polar coordinates themselves.

For each experiment, values of $G_{\max}$ or $r_{\min}$ are varied to generate different eigenvalue distributions.  For each evaluation of an initialization technique (a single point on figure \ref{fig:initialization}), D-LinOSS is trained over a small sweep of 16 hyper-parameter configurations, and each model configuration is trained on 5 different seeds.  This process is repeated for the three datasets SelfRegulationSCP1, Heartbeat, and MotorImagery.  Figure \ref{fig:initialization} displays the average highest test score over all three datasets. 

Radially uniform sampling eigenvalue magnitude and uniformly sampling eigenvalue phase is the best performing initialization technique observed.  For these three datasets, a lower bound of $r_{\min} = 0.9$ results in the highest average test accuracy. 


\end{document}